\definecolor{mygreen}{RGB}{28,172,0} % color values Red, Green, Blue
\definecolor{mylilas}{RGB}{170,55,241}
\newtheorem{theorem}{Theorem}
\newtheorem{lemma}[theorem]{Lemma}
\newtheorem{remark}[theorem]{Remark}
\begin{document}

% paper title
% Titles are generally capitalized except for words such as a, an, and, as,
% at, but, by, for, in, nor, of, on, or, the, to and up, which are usually
% not capitalized unless they are the first or last word of the title.
% Linebreaks \\ can be used within to get better formatting as desired.
% Do not put math or special symbols in the title.
\title{On the Complexity of the Weighted Fused Lasso}

\author{Jos\' e Bento\quad jose.bento@bc.edu
\;\;\;\;\;\;\;
Ralph Furmaniak \quad cf@acm.org 
\;\;\;\;\;\;\;   Surjyendu Ray \quad raysc@bc.edu
}

% The paper headers
\markboth{On the Complexity of the Weighted Fused Lasso}{}

\maketitle

% As a general rule, do not put math, special symbols or citations
% in the abstract or keywords.
\begin{abstract}

The solution path of the 1D fused lasso for an $n$-dimensional
input is piecewise linear with $\mathcal{O}(n)$
segments \cite{hoefling2010path, tibshirani2011solution}. 
However, existing proofs of this
bound do not hold for the \emph{weighted} fused lasso.
At the same time, results for the generalized lasso, 
 of which the weighted fused lasso is a special case,
allow $\Omega(3^n)$ segments \cite{mairal2012complexity}. 
In this paper, we prove that the number of segments 
in the solution path of the weighted fused lasso is $\mathcal{O}(n^2)$, and that, for some instances, it is $\Omega(n^2)$.
We also give a new, very simple, proof
of the $\mathcal{O}(n)$ bound for the fused lasso.
\end{abstract}

% Note that keywords are not normally used for peerreview papers.
\begin{IEEEkeywords}
Filter, Lasso, Projection, Proximal Operator, Sum of Absolute Differences, Total Variation, Weights \end{IEEEkeywords}

% For peer review papers, you can put extra information on the cover
% page as needed:
% \ifCLASSOPTIONpeerreview
% \begin{center} \bfseries EDICS Category: 3-BBND \end{center}
% \fi
%
% For peerreview papers, this IEEEtran command inserts a page break and
% creates the second title. It will be ignored for other modes.
\IEEEpeerreviewmaketitle

\section{Introduction}
\label{sec:intro}

The generalized lasso solves
\vspace{-0.2cm}
\begin{equation}\label{eq:gen_lasso}
\underset{x \in \mathbb{R}^n}{\text{minimize}}\;
 \frac{1}{2} \|y - A x\|^2_2 + \gamma \|Dx\|_1,
\end{equation}
%
%\vspace{-0.2cm}
%
where $\gamma \geq 0$, $D\in \mathbb{R}^{p \times n}$, $A \in \mathbb{R}^{m \times n}$ and $y \in \mathbb{R}^m$.
A special case of this problem, which is important for
signal processing (see \cite{chambolle2016geometric} and references therein for applications), is 
\vspace{-0.2cm}
\begin{equation}\label{eq:wei_fused_lasso}
\underset{x \in \mathbb{R}^n}{\text{minimize}}\; \frac{1}{2}\sum^{n}_{t=1} (x_t - y_t)^2+ \gamma \sum^{n-1}_{t=1} \alpha_t |x_{t+1}-x_t|.
\end{equation}
%
%Equation \eqref{eq:wei_fused_lasso} is obtained from \eqref{eq:gen_lasso} by setting $A = I$ and $D_{t,t} = -\alpha_t, D_{t,t+1} = \alpha_t$ if $t<n$ and $D_{i,j} = 0$ otherwise. Here we have $n=m$ and $p = n-1$.
%
We call \eqref{eq:wei_fused_lasso} the \emph{weighted 1-D fused lasso} (W1FL) with input $y$ and weights $\alpha_t \geq 0\; \forall t$, and we distinguish it from the well studied special case when $\alpha_t =1 \; \forall t$, which is known as the 1-D fused lasso (1FL). 

There are efficient algorithms to solve 1FL for a fixed $\gamma$. \emph{Direct algorithms}, algorithms that solve a problem exactly in a finite number of steps, include the Taut String algorithm, \cite{davies2001local}, and the algorithm of \cite{johnson2013dynamic}, based on dynamic programing,
both with a worst case complexity of $\mathcal{O}(n)$;
and the algorithm of \cite{condat2013direct}, which is very fast in practice, but has $\mathcal{O}(n^2)$ worst case complexity. This algorithm has recently been improved to  finish in $\mathcal{O}(n)$ steps, \cite{condat2017fasterOnversion}. There are also algorithms that can deal with W1FL, for fixed $\gamma$, in $\mathcal{O}(n^2)$ iterations, \cite{barbero2014modular}, and in $\mathcal{O}(n)$ iterations, \cite{dumbgen2009extensions}. 
\emph{Iterative algorithms}, mostly first-order fixed-point
methods, include \cite{beck2009fast, chambolle2011first, wahlberg2012admm, bonettini2012convergence, ye2011split,barbero2011fast,condat2013primal, ramdas2016fast,pang2016primal}. Some of these
are based on the ADMM method, known to achieve the fastest possible
convergence rate among all first order methods,\cite{francca2016explicit,francca2017distributed}.
However, in many applications, when precision is crucial, or when implementing a termination procedure has a non-negligible computational cost, direct algorithm are preferred.

Frequently, we are not just interested in solving \eqref{eq:wei_fused_lasso} for a single $\gamma$.
 Let $x^*(\gamma)$ be the unique solution of \eqref{eq:wei_fused_lasso} \footnote{The generalized lasso \eqref{eq:gen_lasso} might not always have a unique solution \cite{tibshirani2013lasso}.}.
 An important problem is characterizing the set $\{x^*(\gamma):\gamma\geq0\}$, known as the \emph{solution path} of \eqref{eq:wei_fused_lasso}.
This might be necessary, for example, to efficiently ``tune''
the 1FL, i.e., find the value of $\gamma$ that gives best the result in a given application, \cite{dong2011automated, wen2012parameter}.

Another example is when we want to use a W1FL-path-solver to find the unique solution $\tilde{x}^*(\tilde{\gamma})$ of 
\vspace{-0.2cm}
\begin{equation}\label{eq:wei_fused_lasso_projection}
 \underset{\tilde{x} \in \mathbb{R}^n}{\text{min}}\frac{1}{2}\sum^{n}_{t=1} (\tilde{x}_t - y_t)^2  \text{ subject to } \sum^{n-1}_{t=1} \alpha_t |\tilde{x}_{t+1}-\tilde{x}_t| \leq \tilde{\gamma}.
\end{equation}
%
%One possible direct algorithm is doing bisections, since the optimal objective values of \eqref{eq:wei_fused_lasso} and \eqref{eq:wei_fused_lasso_projection} are monotonic in $\tilde{\gamma}$ and $\gamma$ and smooth piecewise quadratic \cite{tibshirani2011solution}. The number of steps to finish however will depend on the values of $y$ and $\alpha$. 
%
\vspace{-0.15cm}
One approach 
is to see, cf. \cite{loris2009performance}, that 
$\tilde{x}^*(\tilde{\gamma})=x^*({\gamma})$
if 
\begin{equation}\label{eq:projection_to_non_projection}
\gamma = \max_i |y_i - \tilde{x}_i^*(\tilde{\gamma})| \text{ or inversely if } 
\tilde{\gamma} = \|x^*(\gamma)\|_1.
\end{equation}

\vspace{-0.13cm}
We can then use the path $\{x^*(\gamma)\}$, and  \eqref{eq:projection_to_non_projection}, to find which $\gamma$ we should use in our solver to get $\tilde{x}^*(\tilde{\gamma})$.  Relations \eqref{eq:projection_to_non_projection} show that finding the solution paths of \eqref{eq:wei_fused_lasso_projection} and of \eqref{eq:wei_fused_lasso} is equivalent.

Characterizing $\{x^*(\gamma)\}$ is possible because $x^*(\gamma)$ is a continuous
piecewise linear function of $\gamma$, with a finite number $T$ of 
different linear segments, a result that follows directly from the KKT conditions, \cite{tibshirani2011solution}. Therefore, to characterize $x^*(\gamma)$, we only need to find the \emph{critical values} $\{\gamma_i\}^{T-1}_{i=1}$ at which $x^*(\gamma)$ changes linear segment, and the value of $x^*(\gamma)$ at these $\gamma$'s.

All efficient existing algorithms that find the solution path $\{x^*(\gamma)\}$ in a finite number of steps are essentially homotopy algorithms. These start with $x^*(\gamma_1=0) = y$, and sequentially compute $x^*(\gamma_{i+1})$ from $x^*(\gamma_{i})$.
One example is the algorithm of \cite{tibshirani2011solution}, that for 1FL has a complexity of $\mathcal{O}(n\log^2{n})$, with a special heap implementation. The best method is the primal path algorithm of \cite{hoefling2010path}, with $\mathcal{O}(n\log{n})$ complexity.

\begin{figure}[t]
 \begin{minipage}{.5\linewidth}
 \centering
  \includegraphics[width=4.cm,trim={0.cm 0 0.cm 0.cm},clip]{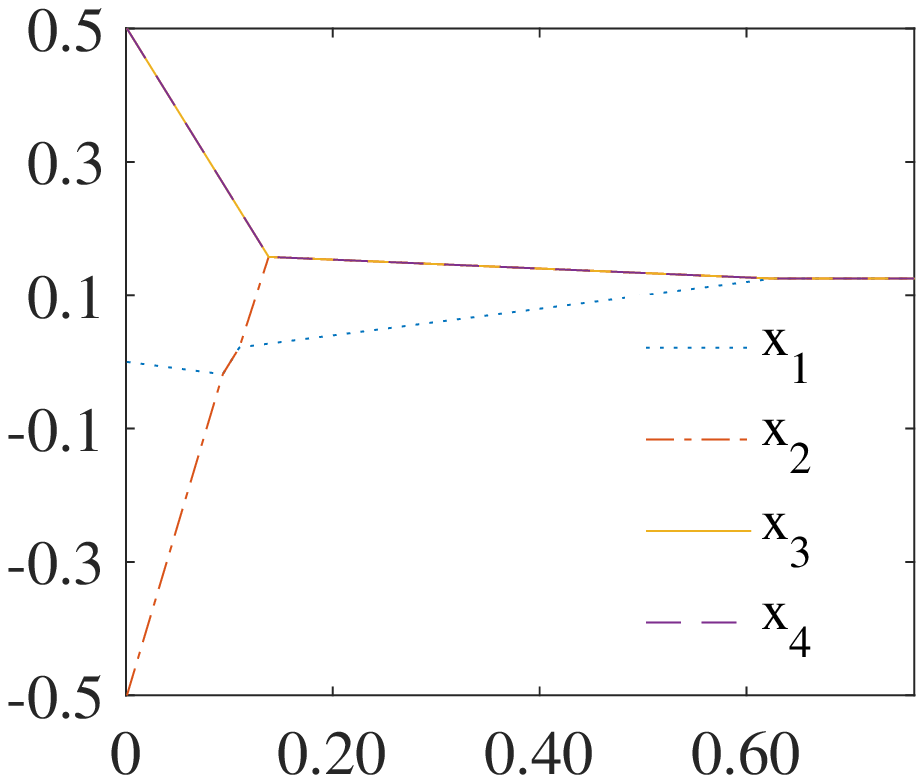}
  \put(-93,77){\small $x^*$ as a function of $\gamma$}
  \end{minipage}
  \begin{minipage}{.45\linewidth}\small
If $\alpha_1 = 1/50$, $\alpha_2 = 1/2$, $\alpha_3 = 1/2$, 
$y_1 = 0$, $y_2 = -1/2$, $y_3 = 1/2$ and $y_4 = 1/2$ then
$x^*_1 = x^*_2$ when $\gamma \in [\frac{25}{27},\frac{25}{23}] \cup [\frac{25}{4},\infty)$ and $x^*_1 \neq x^*_2$ otherwise. At $\gamma = \frac{25}{23}$, variables $x_1$ and $x_2$``un-fuse'' and at $\gamma \in \{ \frac{25}{27},\frac{25}{4}\}$ they ``fuse''.
    \end{minipage}
  \label{fig:example_of_fusing_and_unfusing}
  \caption{Example showing that in W1FL, variables can ``fuse'' and ``un-fuse''. Hence, existing proofs that $T = \mathcal{O}(n)$ do not hold for W1FL.}
\vspace{-0.6cm}
\end{figure}

To understand the complexity of finding $\{x^*(\gamma)\}$, we have to bound $T$. For 1FL,  \cite{friedman2007pathwise} proves that $T=\mathcal{O}(n)$, and \cite{tibshirani2011solution,tibshiranisupplement } give a  different proof of the same fact. These proofs' idea is as follows. The non-differentiable penalty $\sum_t |x_{t+1} - x_t|$ in \eqref{eq:wei_fused_lasso} implies that we have a critical point whenever, as $\gamma$ increases, for some $t$, the term 
$|x^*_{t+1} - x^*_t|$, goes from non-zero to zero, or vice-versa. When this happens, we say that $x^*_{t+1}$ and $x^*_t$ ``fuse'', or, conversely, that they ``un-fuse''. One then proves that, as $\gamma$ increases, variables never ``un-fuse''. Hence, there are at most $n$ fusing events, and thus $T \leq n$. 

Unfortunately, existing proofs do not extend to W1FS.
Figure \ref{fig:example_of_fusing_and_unfusing} is an example where variables ``fuse'' and ``un-fuse''. Hence, to bound $T$, we are left with bounds for the generalized lasso which, in a worst case scenario, can be $\Omega(3^n)$ \cite{mairal2012complexity}.

{$\bullet$} Our main contribution is to show that $T = \mathcal{O}(n^2)$, and that, in a worst case scenario, $T = \Omega(n^2)$.
%

%
%%%%%%%%%%%%%%%%%%%%%%%%%%%%%%%%%%%
%%%%%%%%%%%%%%%%%%%%%%%%%%%%%%%%%%
%
%
\vspace{-0.2cm}
\section{Main results}

We start by reformulating W1FS, as stated in Theorem \ref{th:spring_representation}. In this theorem, and throughout the paper, we use the notation $[n] = \{1,\dots,n\}$.

\begin{theorem}\label{th:spring_representation}
Let $\tilde{y}_i = -\sum^n_{t=i} y_t$ for $i \in [n+1]$, where we assume that $\tilde{y}_{n+1} = 0$. Let $\tilde{\alpha}_{1} = \tilde{\alpha}_{n+1} = 0$, and
$\tilde{\alpha}_{i+1} = \alpha_{i}$ if $i \in [n-1]$.
Let $w^*(\gamma)$ be the unique minimizer of 
\begin{equation}\label{eq:spring_interpretation}
\min_{w\in \mathbb{R}^n} \sum^{n}_{i = 1} (w_{i+1}-w_{i})^2 \text{ s.t. } |w_i - \tilde{y}_i| \leq \gamma \tilde{\alpha}_{i}, i \in [n+1]
\end{equation}
We have that $x^*(\gamma)_t = w^*(\gamma)_{t+1}  - w^*(\gamma)_{t}$ for all $t \in [n]$.
\end{theorem}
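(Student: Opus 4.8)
The plan is to dualize the nondifferentiable penalty and recognize the resulting dual program as \eqref{eq:spring_interpretation}. Let $D\in\mathbb{R}^{(n-1)\times n}$ be the first-difference matrix, $(Dx)_t=x_{t+1}-x_t$, and set $U=\prod_{t=1}^{n-1}[-\gamma\alpha_t,\gamma\alpha_t]$. Using $\gamma\alpha_t|z|=\max_{|u_t|\le\gamma\alpha_t}u_t z$, problem \eqref{eq:wei_fused_lasso} reads $\min_{x\in\mathbb{R}^n}\max_{u\in U}L(x,u)$ with $L(x,u)=\tfrac12\|x-y\|_2^2+u^{\top}Dx$. First I would interchange $\min$ and $\max$: $L$ is continuous, convex in $x$ and affine in $u$, and $U$ is convex and compact, so Sion's minimax theorem applies (equivalently, this is plain convex-QP duality, Slater being trivial). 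The outer function $x\mapsto\max_{u\in U}L(x,u)$ is the strongly convex, coercive objective of \eqref{eq:wei_fused_lasso}, hence attains its minimum at the unique $x^*(\gamma)$; $\max_{u\in U}\min_x L$ is attained by compactness; so a saddle point $(x^*,u^*)$ exists.

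Next I would read off the primal--dual relations from the saddle point. Stationarity in $x$ gives $x^*=y-D^{\top}u^*$, i.e., with the convention $u^*_0=u^*_n=0$, we have $x^*_t=y_t+u^*_t-u^*_{t-1}$ for $t\in[n]$. Substituting $x=y-D^{\top}u$ back into $L$ gives $\min_x L(x,u)=g(u):=u^{\top}Dy-\tfrac12\|D^{\top}u\|_2^2=\sum_{t=1}^{n-1}u_t(y_{t+1}-y_t)-\tfrac12\sum_{t=1}^{n}(u_t-u_{t-1})^2$, so the dual problem is $\max_{u\in U}g(u)$, with maximizer $u^*$.

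Then comes the change of variables. Set $w_i:=u_{i-1}+\tilde y_i$ for $i\in[n+1]$; since $u_0=u_n=0$ and $\tilde y_{n+1}=0$ this forces $w_1=\tilde y_1$ and $w_{n+1}=0$, and since $\tilde y_{i+1}-\tilde y_i=y_i$ one gets $w_{i+1}-w_i=(u_i-u_{i-1})+y_i$. A one-line summation-by-parts computation (using $u_0=u_n=0$) then shows $-\tfrac12\sum_{i=1}^n(w_{i+1}-w_i)^2=g(u)-\tfrac12\sum_{t=1}^n y_t^2$. Moreover $u\mapsto w$ is an affine bijection carrying $U$ onto the feasible set of \eqref{eq:spring_interpretation}: the constraint $|u_i|\le\gamma\alpha_i$, $i\in[n-1]$, is precisely $|w_{i+1}-\tilde y_{i+1}|\le\gamma\tilde\alpha_{i+1}$, while the pinned coordinates $w_1=\tilde y_1$ and $w_{n+1}=\tilde y_{n+1}=0$ are precisely the degenerate constraints at $i\in\{1,n+1\}$, where $\tilde\alpha_1=\tilde\alpha_{n+1}=0$. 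Hence $\max_{u\in U}g(u)$ is equivalent to \eqref{eq:spring_interpretation}; the objective of \eqref{eq:spring_interpretation}, restricted to the affine slice with $w_1,w_{n+1}$ fixed, has Hessian equal to the path-graph Laplacian with its two endpoints removed, which is positive definite, so \eqref{eq:spring_interpretation} has the unique minimizer $w^*(\gamma)$, with $w^*_i=u^*_{i-1}+\tilde y_i$ (this also confirms uniqueness of $u^*$). Combining this with $x^*_t=y_t+u^*_t-u^*_{t-1}$ and $\tilde y_{t+1}-\tilde y_t=y_t$ yields $x^*(\gamma)_t=(w^*_{t+1}-\tilde y_{t+1})-(w^*_t-\tilde y_t)+y_t=w^*(\gamma)_{t+1}-w^*(\gamma)_t$, as claimed.

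The argument contains no deep step; the main thing that needs care is keeping the boundary indices ($u_0,u_n$, $\tilde y_{n+1}$, $\tilde\alpha_1,\tilde\alpha_{n+1}$) consistent across the substitutions, together with verifying that \eqref{eq:spring_interpretation} genuinely has a unique solution so that the stated identity is well-posed. The minimax step could equally be replaced by writing down the KKT system of \eqref{eq:wei_fused_lasso} directly and solving for $x^*$ in terms of the dual multipliers.
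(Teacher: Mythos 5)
Your proof is correct, and it reaches the paper's conclusion through a genuinely different, though closely related, duality mechanism. The paper's supplementary proof goes through Moreau's identity \eqref{eq:moreide}: it computes the Fenchel conjugate of the weighted total-variation penalty in closed form (an indicator of the polytope $|\sum_{i=t+1}^n x_i|\leq\gamma\alpha_t$), so that $\hat{F}(y)$ is a Euclidean projection, and then changes variables $w_t=\sum_{i=t}^n(x_i-y_i)$ to obtain \eqref{eq:spring_interpretation}. You instead write the penalty as a support function of the box $U$, justify the $\min$/$\max$ exchange by Sion (equivalently QP duality), eliminate $x$ to get the box-constrained dual quadratic program in $u$, and pass to $w_i=u_{i-1}+\tilde{y}_i$. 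These are two presentations of the same dual object: your multiplier satisfies $u_t=\sum_{i=t+1}^n \hat{x}_i$ with $\hat{x}=\hat{F}(y)=D^{\top}u^*$, so your stationarity relation $x^*=y-D^{\top}u^*$ is precisely Moreau's identity for this conjugate pair, and your change of variables coincides with the paper's after this identification. What your route buys: you never need the closed form of the conjugate, the summation-by-parts identity $-\tfrac12\sum_i(w_{i+1}-w_i)^2=g(u)-\tfrac12\|y\|^2$ makes the equivalence of objectives explicit, and you verify that \eqref{eq:spring_interpretation} indeed has a unique minimizer (positive-definite reduced Hessian on the slice with $w_1,w_{n+1}$ pinned), a point the statement assumes. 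What the paper's route buys is the direct projection interpretation of \eqref{eq:spring_interpretation}, which it exploits later. One bookkeeping remark: \eqref{eq:spring_interpretation} as printed minimizes over $w\in\mathbb{R}^n$ while the constraints run over $i\in[n+1]$; your reading --- an $(n+1)$-dimensional variable with $w_1=\tilde{y}_1$ and $w_{n+1}=0$ forced by the zero-width constraints at $\tilde{\alpha}_1=\tilde{\alpha}_{n+1}=0$ --- is the intended one and agrees with the paper's own substitution, so this does not affect correctness.
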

See \cite{barbero2014modular}, e.g., for a proof of Theorem \ref{th:spring_representation}. The Supplementary Material
includes another proof, based on the Moreau identity.

Theorem \ref{th:spring_representation} allows us to study
the number linear segments in $x^*$ by studying the number linear segments in $w^*$. Indeed, since by Theorem \ref{th:spring_representation} we have that $x^*(\gamma)_t = w^*(\gamma)_{t+1}  - w^*(\gamma)_{t}$, it follows that $x^*(\gamma)$ only changes linear segment
if $w^*(\gamma)$ changes linear segment. Hence, if there are at most $T$ different linear segments in $w^*(\gamma)$, there are at most $T$ different linear segments in $x^*(\gamma)$.
Similarly, if there are at least $T$ linear segments  in $w^*(\gamma)$, and if, for each $\gamma$, no two consecutive components of $w^*$ change linear segment, then there are at least $T$ linear segments in $x^*(\gamma)$.

We can make a few
simple observations about $w^*(\gamma)$, defined in Theorem  \ref{th:spring_representation},
which we use to get bounds on $T$.
%
%\subsection{Notation}
First we introduce some notation.
Figure \ref{fig:illustration_of_notation} illustrates its use.
\begin{itemize}[leftmargin=*]
\item We refer to $w^*_i$ as the $i$th variable or $i$th \emph{point}.
We refer to $[\tilde{y}_i - \tilde{\alpha}_i\gamma,\tilde{y}_i + \tilde{\alpha}_i\gamma]$ as the \emph{interval associated to} the $i$th point. We say that the $i$th point is \emph{touching its left or right boundary} if $w^*_i = \tilde{y}_i - \tilde{\alpha}_i\gamma$ or if $w^*_i = \tilde{y}_i + \tilde{\alpha}_i\gamma$ respectively.  A point that is not touching either side of the boundary of its interval is called \emph{free}. Otherwise, it is called \emph{non-free}.
\item We define $F(\gamma) = \{i: |w^*(\gamma)_i  - \tilde{y}_i| < \tilde{\alpha}_i\gamma\}$  and $B(\gamma) = [n+1]\backslash F$. In words, a point is in $F$ if and only if it is it is free. A point is in $B$ if and only if it is not free.
\item  We define $s_i(\gamma) = 1$ if and only if $w^*(\gamma)_i  = \tilde{y}_i + \tilde{\alpha}_i\gamma$ and $s_i(\gamma) = -1$ if and only if $w^*(\gamma)_i  = \tilde{y}_i - \tilde{\alpha}_i\gamma$.
For $i = 1$ and $i = n+1$, for which the left and right boundaries are the same, we choose $s_i$ by convention. It does not matter which values we choose.
\item For the $i$th point, we define $i^\triangleleft(\gamma) =  \max \{j \in B(\gamma): j < i\}$
and let $i^\triangleright(\gamma) =  \min \{j \in B(\gamma): i < j\}$.
In words, $i^\triangleleft$ and $i^\triangleright$ are the pair of indices 
of non-free points, smaller and larger than $i$ respectively, that are closer to $i$.
\end{itemize}
For simplicity, and whenever clear from the context, we omit the dependency in $\gamma$ in our expressions. We now list our observations.
\begin{figure}[h!]
\centering
    \includegraphics[width=8.35cm,trim={0.cm 0 0.cm 0.cm},clip]{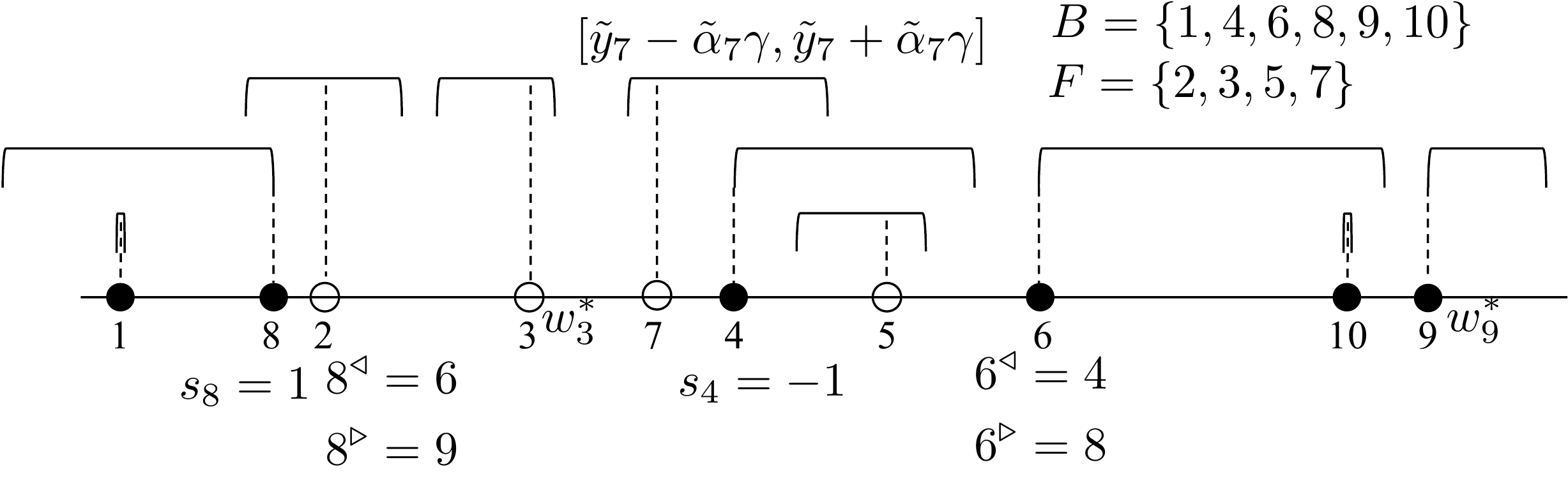}
    \caption{Illustration of the notation used. Square horizontal brackets represent intervals inside which each variable must be. Solid circles represent variables touching their intervals' boundary. In this picture $n = 9$, so $\tilde{\alpha}_1=\tilde{\alpha}_{n+1}=0$.}
  \label{fig:illustration_of_notation}
\end{figure}
%

%\subsection{A few observations}
{\bf 1)} Since $w^*(\gamma)$ is a continuous function
of $\gamma$ (see e.g. \cite{tibshirani2011solution}), if a point $i$ has $\tilde{\alpha}_i > 0$, then
it cannot touch its left (right) boundary for $\gamma_2$ and touch its right (left) boundary for $0 < \gamma_1 < \gamma_2$ without being free for at least one $\gamma \in (\gamma_1,\gamma_2)$. This implies that if $B$ and $F$ do not change for all $\gamma \in [\gamma_1, \gamma_2]$, then $s_i$ is constant in $\gamma \in [\gamma_1, \gamma_2]$.
%Finally, continuity also implies that if the $i$th point is free for $\gamma$, then it must be free in an open interval around $\gamma$.

{\bf 2)} If $i \in F$, then 
$w^*_i$ is dictated by the position of the $i^\triangleleft$th and $i^\triangleright$th points, namely,
\begin{align}\label{eq:movement_dictated_by_extremes}
w^*_i &= w^*_{i^\triangleleft} f_i+ w^*_{i^\triangleright} \bar{f}_i,
\end{align}
where $f_i =  \frac{i^\triangleright - i}{i^\triangleright - i^\triangleleft}$ and $\bar{f}_i = 1 - f_i$.
Equation \eqref{eq:movement_dictated_by_extremes} follows from the fact that, for any $i^\triangleleft< s <  i^\triangleright$, $w^*_s$ comes from the solution of the problem $\min_{\{w_s\}} \sum^{i^\triangleright-1}_{s = i^\triangleleft} (w_{s+1}-w_s)^2$ subject to $w_{i^\triangleleft} = w^*_{i^\triangleleft}$ and $w_{i^\triangleright} = w^*_{i^\triangleright}$. Its solution is that the points $\{w^*_s\}$ must divide the interval $[w^*_{i^\triangleleft},w^*_{i^\triangleright}]$
in $i^\triangleright - i^\triangleleft$ equal parts, hence \eqref{eq:movement_dictated_by_extremes}.
Note that, if for all $\gamma \in [\gamma_1, \gamma_2]$,
$F$ does not change then, by observation 1, $i^\triangleright$, $i^\triangleleft$, $s_{i^\triangleright}$
and $s_{i^\triangleleft}$ are constant for all $\gamma \in [\gamma_1, \gamma_2]$ and \eqref{eq:movement_dictated_by_extremes} becomes
a linear function of $\gamma$ given by
\begin{align}\label{eq:movement_dictated_by_extremes_2}
w^*_i(\gamma) &= (\tilde{y}_{i^\triangleleft} f_i + \tilde{y}_{i^\triangleright} \bar{f}_i) + \gamma
(\tilde{\alpha}_{i^\triangleleft }s_{i^\triangleleft} f_i + \tilde{\alpha}_{i^\triangleright} s_{i^\triangleright}\bar{f}_i).
\end{align}
This also implies that a change in linear segment, and hence a critical point, only occurs when $F$ and $B$ change.
%Also by the first observation, if $i$ is free for $\gamma$ then \eqref{eq:movement_dictated_by_extremes_2} must hold in some open interval around $\gamma$.

{\bf 3)} We give a necessary condition for $\gamma_c$ to be a critical point,
at which $w^*_i$
transitions from being free to non-free as we increase $\gamma$.
Since $w^*$ is continuous and piecewise linear with a finite number of linear segments (see \cite{tibshirani2011solution}),
we know that $F$ is constant in a small enough interval
of the form $I = (\gamma',\gamma_c)$.
We can then use \eqref{eq:movement_dictated_by_extremes_2} 
for all $\gamma \in I$ and the continuity property to conclude that $\gamma_c$
must satisfy
\begin{equation}\label{eq:critical_entering_time}
(\tilde{y}_{i^\triangleleft} f_i + \tilde{y}_{i^\triangleright} \bar{f}_i) + \gamma_c
(\tilde{\alpha}_{i^\triangleleft }s_{i^\triangleleft} f_i + \tilde{\alpha}_{i^\triangleright} s_{i^\triangleright}\bar{f}_i) = \tilde{y}_i + \tilde{\alpha}_i s_i \gamma_c,
\end{equation}
where $s_i$ is evaluated at $\gamma = \gamma_c$
and $i^\triangleleft$, $i^\triangleright$, $s_{i^\triangleleft}$ and $s_{i^\triangleright}$
are evaluated at any point in $I$.

Furthermore, assume that $s_i(\gamma_c) = +1$, then,
for $\gamma \in I$, the left hand side, l.h.s., of \eqref{eq:critical_entering_time} is strictly smaller than the right hand side, r.h.s, and, as $\gamma$ increases to $\gamma_c$, the l.h.s must increase until it is equal to the r.h.s. Hence, we conclude that the following relation holds between their rates of growth
$\tilde{\alpha}_{i^\triangleleft }s_{i^\triangleleft} f_i + \tilde{\alpha}_{i^\triangleright} s_{i^\triangleright}\bar{f}_i >  \tilde{\alpha}_i s_i. 
$
 Similarly, if $s_i(\gamma_c) = -1$, then 
$
\tilde{\alpha}_{i^\triangleleft }s_{i^\triangleleft} f_i + \tilde{\alpha}_{i^\triangleright} s_{i^\triangleright}\bar{f}_i <  \tilde{\alpha}_i s_i. 
$
We can thus write that, if the $i$th point transitions from
free to non-free, then
\begin{equation} \label{eq:free_to_non_free}
\tilde{\alpha}_{i^\triangleleft }s_is_{i^\triangleleft} f_i + \tilde{\alpha}_{i^\triangleright} s_is_{i^\triangleright}\bar{f}_i >  \tilde{\alpha}_i . 
\end{equation}

$\bullet$ Our first result is a new very simple proof, when compared to \cite{friedman2007pathwise,tibshirani2011solution,tibshiranisupplement }, of the known fact that, for 1FL,  $T = \mathcal{O}(n)$. Our second and third result are new altogether.

\begin{theorem}\label{th:linear_complexity_1FL}
1FL has at most $\mathcal{O}(n)$ different linear segments.
\end{theorem}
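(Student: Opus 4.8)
The plan is to work entirely with the spring reformulation of Theorem~\ref{th:spring_representation}, so that bounding the number of linear segments of $x^*(\gamma)$ reduces to bounding the number of changes of the free set $F(\gamma)$: by the remark following \eqref{eq:movement_dictated_by_extremes_2}, a critical value of $\gamma$ can occur only when $F$ (equivalently $B$) changes. For 1FL we have $\tilde{\alpha}_i = 1$ for every interior index $i \in \{2,\dots,n\}$ and $\tilde{\alpha}_1 = \tilde{\alpha}_{n+1} = 0$; in particular $\tilde{\alpha}_i \le 1$ for all $i$. I will show that, as $\gamma$ increases, a point can only move from non-free to free and never back — i.e.\ variables never ``un-fuse'' — which immediately bounds the number of changes of $F$, and hence the number of critical values, by $n-1$.

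The heart of the argument is observation~3, and specifically inequality~\eqref{eq:free_to_non_free}: if the $i$th point transitions from free to non-free as $\gamma$ increases, then $\tilde{\alpha}_{i^\triangleleft}\,s_i s_{i^\triangleleft} f_i + \tilde{\alpha}_{i^\triangleright}\,s_i s_{i^\triangleright}\bar{f}_i > \tilde{\alpha}_i$. Any point that is ever free must be interior, since the endpoints $1$ and $n+1$ have $\tilde{\alpha}=0$ and so can never lie in $F$; hence $\tilde{\alpha}_i = 1$. On the other hand $s_i s_{i^\triangleleft}, s_i s_{i^\triangleright} \in \{-1,+1\}$, $f_i,\bar{f}_i \ge 0$ with $f_i + \bar{f}_i = 1$, and $\tilde{\alpha}_{i^\triangleleft},\tilde{\alpha}_{i^\triangleright} \le 1$, so the left-hand side is at most $f_i + \bar{f}_i = 1 = \tilde{\alpha}_i$, contradicting the strict inequality. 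Therefore, in 1FL no point ever transitions from free to non-free.

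It follows that $F(\gamma)$ is non-decreasing in $\gamma$ and always contained in $\{2,\dots,n\}$, hence it takes at most $n-1$ distinct values and changes at most $n-1$ times on $[0,\infty)$. Since every critical value corresponds to a change of $F$, there are at most $n-1$ critical values, so $w^*(\gamma)$ — and therefore $x^*(\gamma)$ — has at most $n$ linear segments, which is $\mathcal{O}(n)$. The remaining points are bookkeeping: that the endpoints are genuinely excluded from $F$ for every $\gamma$, that several points entering $F$ at a single $\gamma$ only decreases the count, and that the conventionally chosen $s_1, s_{n+1}$ never enter \eqref{eq:free_to_non_free} harmfully (they cannot, since indices $1$ and $n+1$ are never the free index $i$ there). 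I do not expect a real obstacle; the entire content is the one-line observation that the convex combination on the left of \eqref{eq:free_to_non_free} cannot exceed the (unit) weight $\tilde{\alpha}_i$ when all weights equal $1$.
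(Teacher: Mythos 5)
Your proposal is correct and follows essentially the same route as the paper: invoke the spring reformulation, apply inequality \eqref{eq:free_to_non_free} to show that with unit weights no point can pass from free to non-free (since the left-hand side is a convex combination bounded by $1$), and conclude that $F$ only grows, giving at most $n-1$ critical values. Your handling of the boundary indices with $\tilde{\alpha}=0$ is a slightly more careful bookkeeping of the same argument, not a different proof.
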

\begin{proof}
It is enough to prove that $w^*$ has at most $\mathcal{O}(n)$ different linear segments.
For 1FL, we have $\tilde{\alpha}_{i+1} = 1$ for all $i \in [n-1]$. Therefore,
if $w^*_i$ is free and, as $\gamma$ increases, it becomes non-free, then, by \eqref{eq:free_to_non_free}, we must have that
$
1 < s_is_{i^\triangleleft} f_i +  s_is_{i^\triangleright}\bar{f}_i.
$
This implies that 
$
1 < |s_is_{i^\triangleleft} f_i +  s_is_{i^\triangleright}\bar{f}_i| \leq  f_i +  \bar{f}_i = 1,
$
which is a contradiction. Therefore, a point $i$ never goes from $F$ to $B$. Since a critical point in $w^*(\gamma)$ only occurs when $F$ changes (by observation 2), and since $F$ can only change by the addition of $n-1$ variables at most, we have $T = \mathcal{O}(n)$.
\end{proof}

\begin{theorem}
W1FL has $\mathcal{O}(n^2)$ different linear segments.
\end{theorem}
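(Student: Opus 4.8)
The plan is to bound the number of linear segments of $w^*(\gamma)$; by Theorem~\ref{th:spring_representation} this also bounds the number of segments of $x^*(\gamma)$, and by Observation~2 it equals one plus the number of critical values, at each of which some point $i$ either \emph{fuses} (moves from $F$ to $B$) or \emph{un-fuses} (moves from $B$ to $F$). So it suffices to bound the total number of fusing and un-fusing events by $\mathcal{O}(n^2)$. First I would observe that the two counts coincide up to $\mathcal{O}(n)$: a point $i$ with $\tilde\alpha_i>0$ is non-free at $\gamma=0$ (its interval is a single point) and free for all large $\gamma$ (its interval eventually contains the corresponding coordinate of the unconstrained minimizer of $\sum_i(w_{i+1}-w_i)^2$), so its status toggles at each of its events and it un-fuses exactly one more time than it fuses; summing over the at most $n-1$ such points gives $\#\{\text{un-fuses}\}=\#\{\text{fuses}\}+\mathcal{O}(n)$. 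Hence it is enough to bound the number of fusing events.

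The key structural fact I would establish is that $\Phi(\gamma):=\max_{i\in B(\gamma)}\tilde\alpha_i$ is non-increasing in $\gamma$. It is piecewise constant and changes only at critical values; it cannot increase at an un-fuse, which only removes indices from $B$; and it cannot increase at a fuse of a point $i$, because \eqref{eq:free_to_non_free} forces $\max(\tilde\alpha_{i^\triangleleft},\tilde\alpha_{i^\triangleright})\ge\tilde\alpha_{i^\triangleleft}s_is_{i^\triangleleft}f_i+\tilde\alpha_{i^\triangleright}s_is_{i^\triangleright}\bar f_i>\tilde\alpha_i$ with $i^\triangleleft,i^\triangleright$ already in $B$, so the incoming weight $\tilde\alpha_i$ is dominated by a weight already present in $B$. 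This is precisely the generalization of the $1$FL argument (the case where all $\tilde\alpha_i$ are equal, so \eqref{eq:free_to_non_free} is never satisfiable and $\Phi$ is constant because no point ever fuses). Consequently $\Phi$ takes at most $r\le n+1$ distinct values, so the $\gamma$-axis splits into at most $r$ maximal intervals (\emph{epochs}) on which $\Phi$ is constant, say $\Phi\equiv\beta$; on such an epoch no point of weight $\ge\beta$ can fuse, since a fusing point needs a strictly heavier neighbour in $B$ whose weight would exceed $\Phi=\beta$. In particular a point of weight $\tilde\alpha_i$ fuses only during the at most $r-1$ epochs with $\beta>\tilde\alpha_i$.

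It therefore remains to show that within a single epoch a fixed point fuses only $\mathcal{O}(1)$ times; then each point fuses $\mathcal{O}(r)=\mathcal{O}(n)$ times and the total is $\mathcal{O}(n^2)$. I would attempt this by recursion on the number of distinct weights: during an epoch with $\Phi\equiv\beta$, the non-free points of weight $\beta$ can never re-fuse (again by \eqref{eq:free_to_non_free}), so they form a set that only shrinks, and while non-free they are pinned to boundaries moving linearly in $\gamma$ (Observation~1); hence they cut the path into independent sub-paths, each a W1FL-type problem with fixed, linearly moving endpoints and all interior weights strictly below $\beta$, i.e.\ with strictly fewer distinct interior weights, to which the inductive hypothesis applies. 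One then combines across the at most $n_\beta$ sub-epochs created as these points un-fuse (with $n_\beta$ the number of weight-$\beta$ points), and sums over epochs using $\sum_\beta n_\beta=n+1$.

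The hard part is exactly this last step, for two reasons. First, after cutting the path at the heaviest non-free points, those points become fixed endpoints of the sub-problems but still carry the weight $\beta$, which is heavier than every interior weight; so they can still act as the heavier neighbour that triggers a fuse, and the invariant $\Phi$ degenerates (it stays equal to $\beta$) on each sub-problem. One needs a finer monotone quantity — for instance the heaviest weight among non-free \emph{interior} points — together with an argument that it can only increase when an interior point fuses directly onto one of the two heavy endpoints, and a bound on how often that can happen. Second, the sub-problems merge (up to $\mathcal{O}(n)$ times per epoch) as the heavy endpoints un-fuse, and a naive accounting that reapplies the inductive bound after each merge loses a factor of $n$ and yields only $\mathcal{O}(n^3)$; obtaining $\mathcal{O}(n^2)$ requires charging each fusing event of $i$ to a witnessing pair $(i,j)$ with $\tilde\alpha_j>\tilde\alpha_i$ (or to an un-fuse of a heavier point) and showing that each such pair, resp.\ un-fuse, is charged only boundedly often.
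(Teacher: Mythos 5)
There is a genuine gap, and it is exactly where you flag it yourself: the whole bound hinges on showing that a fixed point fuses only $\mathcal{O}(1)$ times within a single epoch of your potential $\Phi(\gamma)=\max_{i\in B(\gamma)}\tilde\alpha_i$, and no such argument is given. The monotonicity of $\Phi$ (which does follow from \eqref{eq:free_to_non_free} essentially as you say) only partitions the $\gamma$-axis into at most $n+1$ epochs; it says nothing about how many fuse/un-fuse events occur inside one epoch, because points strictly lighter than $\Phi$ are free to toggle repeatedly there. Your proposed recursion on the number of distinct weights degenerates for the reason you identify: after cutting at the heaviest non-free points, those points survive as moving endpoints that still carry the maximal weight, so the invariant does not strictly decrease on the sub-problems, and the sub-problems also merge as heavy points un-fuse. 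The finer monotone quantity and the charging scheme (``charge each fuse of $i$ to a pair $(i,j)$ with $\tilde\alpha_j>\tilde\alpha_i$, or to an un-fuse of a heavier point, with bounded multiplicity'') are named but never constructed, and by your own accounting the fallback yields only $\mathcal{O}(n^3)$. As it stands the proposal therefore proves no bound beyond what the generalized-lasso worst case already allows; the reduction from all events to fusing events (un-fuses $=$ fuses $+\,\mathcal{O}(n)$ by the toggling/parity argument) is fine but does not help close this.

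For comparison, the paper's proof needs no potential function and no recursion: it charges every critical value $\gamma_c$ to the pair $(i,j)=(i^\triangleleft,i^\triangleright)$ of nearest non-free points flanking a point that changes status at $\gamma_c$. For a fixed pair $(i,j)$ and fixed signs $(s_i,s_j)$, every free point $k$ with $i<k<j$ moves linearly in $\gamma$ by \eqref{eq:movement_dictated_by_extremes_2}, and feasibility $w_k^*\in\tilde y_k+\gamma[-\tilde\alpha_k,\tilde\alpha_k]$ confines $\gamma$ to an interval $[a_k,b_k]$; intersecting over $k$ gives $\gamma_c\in[A_{ij},B_{ij}]$, and since the point changing status sits on its boundary at $\gamma_c$, in fact $\gamma_c\in\{A_{ij},B_{ij}\}$. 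With $4$ sign patterns this gives at most $8$ critical values per pair, hence at most $8\binom{n}{2}=\mathcal{O}(n^2)$ critical values in total. If you want to salvage your approach, the missing per-epoch (or per-pair) bounded-multiplicity argument would have to do essentially this kind of work anyway, so you may as well charge to flanking pairs directly.
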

\begin{proof}
At any critical value $\gamma_c$ at which $B$ changes, $B$ might change by multiple points. Some added others removed. Let $i(\gamma_c,r)$ be the $r$th element 
changing in $B$ at $\gamma_c$, and let $i^\triangleright(\gamma_c,r)$ and $i^\triangleleft(\gamma_c,r)$ be indices of non-free points, larger and smaller than $i(\gamma_c,r)$ respectively, that are closer to $i(\gamma_c,r)$.

Let $i < j$ and consider the set
$
S_{ij} = \{\gamma_c:(i^\triangleleft(\gamma_c,r),i^\triangleright(\gamma_c,r)) = (i,j)\text{ for some } r\},
$
 i.e., the set of critical values at which some point that is being added or removed from $B$ has as its closest non-free points to the left and to the right the points $i$ and $j$. We claim, and latter prove, that $|S_{ij}| \leq 8$. The total number of critical points is equal to $\sum_{i<j} |S_{ij}| \leq 8 {n \choose 2} = \mathcal{O}(n^2)$, and we are done.

Now we prove that $|S_{ij}| \leq 8$. Let $\gamma_c  \in S_{ij}$. Let $i<k < j$. By equation \eqref{eq:movement_dictated_by_extremes}, we know that $w_k^*(\gamma_c)$ can be expressed as a linear function of $\gamma_c$. Hence, together with the fact that we must have
$w_k^* \in 
\tilde{y}_k + \gamma_c[-\tilde{\alpha}_k,\tilde{\alpha}_k]$, we know that $\gamma_c \in [a_k,b_k]$, for some $a_k \geq 0$ and $b_k$, which depend
on $s_i$ and $ s_j$, and the parameters $\tilde{\alpha},\tilde{y}$ of the problem.  $b_k$ might be $\infty$. 
In words, for $w^*_k(\gamma_c)$ to be feasible, $\gamma_c$ must be in some interval $ [a_k,b_k]$

This implies that $\gamma_c\in \cap_{i < k < j} [a_k,b_k] = [A_{ij},B_{ij}]$, for some $A_{i,j} \geq 0$ and $B_{i,j}$, which depend
on $s_i$ and $ s_j$, and the parameters $\tilde{\alpha},\tilde{y}$ of the problem. $B_{i,j}$ might be $\infty$.

We argue that it must be that $\gamma_c$ is equal to either $A_{ij}$ or $B_{ij}$. Indeed, since $\gamma_c$ is a critical point, if $k = i(\gamma_c,r)$ for some $r$, then  $w^*_k$ must be at the boundary of $\tilde{y}_k + \gamma_c[-\tilde{\alpha}_k,\tilde{\alpha}_k]$. This implies that $\gamma_c$ is either $a_k$ or $b_k$, and hence that $\gamma_c$ is at the boundary of $\cap_{i < k' < j} [a_{k'},b_{k'}]$.
There are $4$ choices for the pair $(s_i,s_j)$, and, for each of these choices, $\gamma_c$ can be either $A_{ij}$ or $B_{ij}$. Hence, for any pair $(i,j)$, there are at most $8$ possible values for $\gamma_c$. Thus $|S_{ij}| \leq 8$.
\end{proof}

\begin{theorem}\label{th:lower_bound}
There exists $\alpha$ and $y$ such that W1FL
has $\Omega(n^2)$ different linear segments. One example is to chose $\alpha$ and $y$ such that $\tilde{\alpha}$ and $\tilde{y}$ satisfy
\begin{align}
&\tilde{\alpha}_i = (i-1)^2, \forall i\in[n],\tilde{\alpha}_{n+1}=0,\label{eq:counter_1}\\
&\tilde{y}_i = (-1)^i q_i, \forall i\in[n],\tilde{y}_{n+1}=0,\\
&q_{1} \hspace{-1mm} =\hspace{-0.8mm}1,q_{2} \hspace{-1mm} =\hspace{-0.8mm}2,q_{i+2}\hspace{-1mm} =\hspace{-0.8mm}2q_{i+1}\hspace{-1mm} -\hspace{-0.8mm}q_{i} \hspace{-1mm} +\hspace{-0.8mm} 2 g_{i+2} \hspace{-1mm} +\hspace{-0.8mm}1, \forall i\in[n-2],\\
&g_3 = 1/3, g_{i+3} = 2g_{i+2} +1 \label{eq:counter_4}.
\end{align}
\end{theorem}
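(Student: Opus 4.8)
The plan is to produce an increasing sequence of $\Omega(n^2)$ distinct critical values $\gamma$ for the spring problem~\eqref{eq:spring_interpretation} with the stated parameters. By Theorem~\ref{th:spring_representation} this suffices, and for this instance the count transfers to $x^*$ with no loss: since $\tilde\alpha_1=\tilde\alpha_{n+1}=0$, the two end points are pinned, so $w^*_1(\gamma)\equiv\tilde y_1$ and $w^*_{n+1}(\gamma)\equiv 0$ are constant in $\gamma$, and on every linear segment the slope vector of $w^*$ vanishes at coordinates $1$ and $n+1$. If $w^*$ changed slope at some $\gamma_c$ but $x^*$ did not, the difference of the two slope vectors of $w^*$ would be a nonzero vector that is constant across all $n+1$ coordinates and equal to $0$ at coordinate $1$ --- impossible. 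Hence the linear segments of $x^*$ and of $w^*$ are equal in number, and it is enough to work with $w^*$.

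First I would put the recurrences in closed form. From $g_{i+3}=2g_{i+2}+1$ one gets $g_i+1=2^{\,i-3}(g_3+1)=2^{\,i-1}/3$, so $g_i=2^{\,i-1}/3-1$; substituting into $q_{i+2}-2q_{i+1}+q_i=2g_{i+2}+1=2^{\,i+2}/3-1$ and solving this linear recurrence with $q_1=1,q_2=2$ gives $q_i=(4/3)\,2^i+\mathcal{O}(i^2)$, so the $q_i$ are positive, strictly increasing, and have positive, geometrically growing second differences. The qualitative facts I will actually use are: $\tilde\alpha_i=(i-1)^2$ is strictly convex in $i$ (constant second difference $2$); $\tilde y_i=(-1)^i q_i$ alternates in sign with growing magnitude; and, because $|\tilde y_i|$ grows exponentially while $\tilde\alpha_i$ grows only quadratically, the interval $[\tilde y_i-\gamma\tilde\alpha_i,\tilde y_i+\gamma\tilde\alpha_i]$ does not contain $0$ until $\gamma$ is exponentially large. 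The role of the additive $+1$'s, the factor $1/3$, and the initial data $g_3=1/3,q_1=1,q_2=2$ is not to control growth but to make the critical values produced below pairwise distinct.

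Next I would track $w^*(\gamma)$ as $\gamma$ increases from $0$ and read off a cascade of fusing and un-fusing events. At $\gamma=0$ every point is pinned, $w^*_i(0)=(-1)^i q_i$; since neighbouring targets have opposite signs, linearizing the objective shows that for small $\gamma>0$ every interior point is still non-free and touches the boundary of its interval on the side facing the origin, so $s_i=(-1)^{i+1}$ for $1<i<n+1$. Using observations~1--3 --- in particular~\eqref{eq:movement_dictated_by_extremes_2} for the linear motion of free points and the transition inequality~\eqref{eq:free_to_non_free}, which is satisfiable precisely because $\tilde\alpha$ is strictly convex (with $s_is_{i^\triangleleft}=s_is_{i^\triangleright}=1$) --- I would show that, as $\gamma$ grows, interior points change free/non-free status one (or a few non-adjacent ones) at a time in a prescribed order, each change occurring at the $\gamma_c$ solving an equation of the form~\eqref{eq:critical_entering_time} for the relevant triple $(i^\triangleleft,i,i^\triangleright)$ and sign pattern; a given point first becomes free, later re-pins against the opposite boundary, and so on, because its bracket of nearest non-free neighbours itself moves $\Theta(n)$ times. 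Organised as in the $\mathcal{O}(n^2)$ upper-bound proof, this says: for each of $\Omega(n^2)$ pairs $i<j$ there is a critical value at which some point strictly between $i$ and $j$ changes status with $i,j$ as its nearest non-free neighbours (i.e.\ the set $S_{ij}$ of that proof is nonempty). Correctness of the hypothesised path is established by induction over the stages: feasibility of the candidate $w^*(\gamma)$ on each $\gamma$-interval is immediate from~\eqref{eq:movement_dictated_by_extremes_2} together with the interval constraints that define the stage, and optimality follows from the KKT characterisation, i.e.\ from checking that every free point's interpolated value lies strictly inside its interval and every non-free point's ``pull'' has the sign forced by~\eqref{eq:free_to_non_free}.

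I expect this last point --- the inductive verification that the guessed configuration $(B(\gamma),s(\gamma))$ really is the global minimiser at every stage --- to be the main obstacle: the local picture is routine, but ruling out shortcuts requires carrying precise inequalities among the $q_i$ and $\tilde\alpha_i$ through the induction, and this is exactly where strict convexity of $\tilde\alpha$, the alternating signs of $\tilde y$, and the exact second-difference recurrence for $q$ are needed. The remaining task is to confirm that the $\Omega(n^2)$ critical values are pairwise distinct; generically the thresholds coming from~\eqref{eq:critical_entering_time} for different triples differ, and the explicit constants in~\eqref{eq:counter_1}--\eqref{eq:counter_4} were chosen to remove the non-generic coincidences, so a finite check (or a denominator/parity argument exploiting the $1/3$ and the $+1$'s) completes the count and yields $T=\Omega(n^2)$.
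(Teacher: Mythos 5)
Your reduction to $w^*$ is fine, and your transfer argument is actually a nice simplification of the paper's: since $\tilde\alpha_1=\tilde\alpha_{n+1}=0$ pin $w^*_1$ and $w^*_{n+1}$, a slope change of $w^*$ that left every difference $w^*_{t+1}-w^*_t$ unchanged would force the slope-change vector to be constant across coordinates and zero at coordinate $1$, hence zero; the paper instead proves that no two consecutive components change segment simultaneously, which is more work. The closed forms for $g_i$ and $q_i$ are also correct. But the core of the theorem --- exhibiting $\Omega(n^2)$ critical values for this instance --- is not proved; it is only planned, and the plan defers exactly the hard part. You propose to guess the entire evolution of $(B(\gamma),s(\gamma))$, certify it stage by stage through KKT, and then separately argue that the resulting thresholds from \eqref{eq:critical_entering_time} are pairwise distinct by a ``finite check.'' You yourself flag the inductive certification as the main obstacle and do not supply the inequalities that would carry it through; likewise, nonemptiness of $\Omega(n^2)$ of the sets $S_{ij}$ and the distinctness of the thresholds are asserted as expectations (``generically,'' ``the constants were chosen to remove coincidences''), not established. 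As it stands this is a programme, not a proof, and the programme chosen is considerably harder than necessary.

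The idea you are missing is that one never needs to track the solution path between critical points or certify the minimizer along the whole path. The paper only establishes \emph{static snapshots}: there are values $0<\gamma_3<\gamma_4<\dots<\gamma_n$ such that at $\gamma_{2k}$ all points $2,\dots,2k-1$ touch their \emph{right} boundaries and at $\gamma_{2k+1}$ all points $2,\dots,2k$ touch their \emph{left} boundaries. Each snapshot follows from conditions on the data alone at that single $\gamma$: the first $r-1$ intervals are nested (first-difference condition \eqref{eq:cond1_sce_1}), a second-difference condition \eqref{eq:cond2_sce_1} forcing a point to join its two pinned neighbours on the same side (a three-variable quadratic computation), and condition \eqref{eq:cond3_sce_1} forcing point $r-1$ onto the side opposite to where $\tilde y_r$ sits; these are in turn implied by the convexity/growth conditions \eqref{eq:cond1_simpler}--\eqref{eq:cond5_simpler}, which your $\tilde\alpha_i=(i-1)^2$ and exponentially convex $q_i$ satisfy. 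Then continuity of $w^*(\gamma)$ (plus $\tilde\alpha_i>0$, so a point cannot touch both ends) forces each of the $r-2$ points to become free somewhere in $[\gamma_r,\gamma_{r+1})$, giving at least $r-2$ status changes per interval and $\Omega(n^2)$ in total --- no induction over the path, no genericity or distinctness-of-thresholds argument of the kind you invoke. Unless you can actually carry out your stage-by-stage KKT induction with explicit inequalities, you should replace it with an argument of this snapshot-plus-continuity type.
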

\begin{remark}
This theorem automatically implies that there are examples for which variables ``fuse'' $\Omega(n^2)$ times and ``un-fuse''
$\Omega(n^2)$ times. Furthermore, its proof implies that the different between the number of ``fuse'' and ``un-fuse'' events is $\mathcal{O}(n)$.
\end{remark}

What is the idea behind Theorem \eqref{th:lower_bound}? The fact $\tilde{\alpha}_i$ grows super-linearly with $i$, allows us to have a value of $\gamma$ around which the $i$th interval is responsible for driving the behavior of $w^*(\gamma)$. 
Let us call this the \emph{$i$th epoch}. 
The fact that $\tilde{y}_i$ oscillates and diverges exponentially fast with $i$, drives points, between epochs, to alternate between being non-free at their right or left boundary, according to whether $\tilde{y}_i$ is very large and negative or very large and positive. Since there are $n$ epochs, and since, in general, from one epoch to the next, $\Omega(n)$ points change from their right to left boundary (or vice versa), we have $\Omega(n^2)$ ``fuse'' and ''un-fuse'' events.

\begin{proof}
We are going to produce a set of sufficient conditions
that guarantee that the number of critical points in $w^*$ 
is $\Omega(n^2)$. It is then an algebra exercise, which we omit, to check that these conditions are satisfied by our choice above.
Finally, we prove why these same conditions imply
that no two consecutive components of $w^*$ change their linear segment at the same time, and hence why $x^*$ also
has at least $\Omega(n^2)$  different linear segments.

In particular, our conditions will imply the existence of $\tilde{\alpha}_2,\dots, \tilde{\alpha}_{n}\neq 0$ and $\tilde{y}_1,\dots\tilde{y}_n$, such that there exits a sequence of critical points $0 < \gamma_3 < \gamma_4<\dots<\gamma_{n}$ such that
 the following two scenarios hold true:
\begin{enumerate}[leftmargin=*]
\item for $\gamma = \gamma_{2k}$, $3 \leq 2k \leq n$, every point $w^*_i$ with $i \in \{2,\dots,2k-1\}$ is touching its right boundary;\label{enu:scenario1}
\item  for $\gamma = \gamma_{2k+1}$, $3 \leq 2k+1 \leq n$, every point  $w^*_i$ with $i \in \{2,\dots,2k\}$ is touching its left boundary;\label{enu:scenario2}
\end{enumerate}
Both scenarios imply that, for $\gamma \in [\gamma_r, \gamma_{r+1})$, $r \in \{3,\dots,n-1\}$,
every point $w^*_i$, $i\in \{2,\dots,r-1\}$, changes from touching
one side of its boundary to the other side of its boundary.
Note that since $\tilde{\alpha}_2,\dots\tilde{\alpha}_{n}\neq 0$, if $\gamma > 0$, a point cannot be simultaneously touching its left and right boundary.
Hence, $w^*(\gamma)$ has at least $r-2$ critical points in $\gamma \in [\gamma_r, \gamma_{r+1})$. Hence, for $\gamma \in [\gamma_2,\gamma_n]$, there
are at least 
\begin{equation}\label{eq:critical_point_estimate}
1 + \dots + n-1 = n (n-1)/2
\end{equation}
critical points.

Let us be more specific about the two scenarios.
In Scenario \ref{enu:scenario1}, in addition to what we have already described,
for $\gamma = \gamma_{r}$ and $r=2k$, we also want the intervals $[\tilde{y}_1 - \tilde{\alpha}_1 \gamma,\tilde{y}_1 + \tilde{\alpha}_1\gamma]$, \dots,
$[\tilde{y}_{r-1} - \tilde{\alpha}_{r-1}  \gamma,\tilde{y}_{r-1}  + \tilde{\alpha}_{r-1} \gamma]$ to be nested, with the intervals for large $i$ containing
the intervals for small $i$. We also want the left boundary of the interval
$[\tilde{y}_{r} - \tilde{\alpha}_{r} \gamma,\tilde{y}_{r} + \tilde{\alpha}_{r}\gamma]$
to be larger than the right most limit of all the intervals involving indices smaller than $r$.
The following picture illustrates these conditions.
\includegraphics[width=8.35cm,trim={0.cm -0.5cm 0.cm -0.5cm},clip]{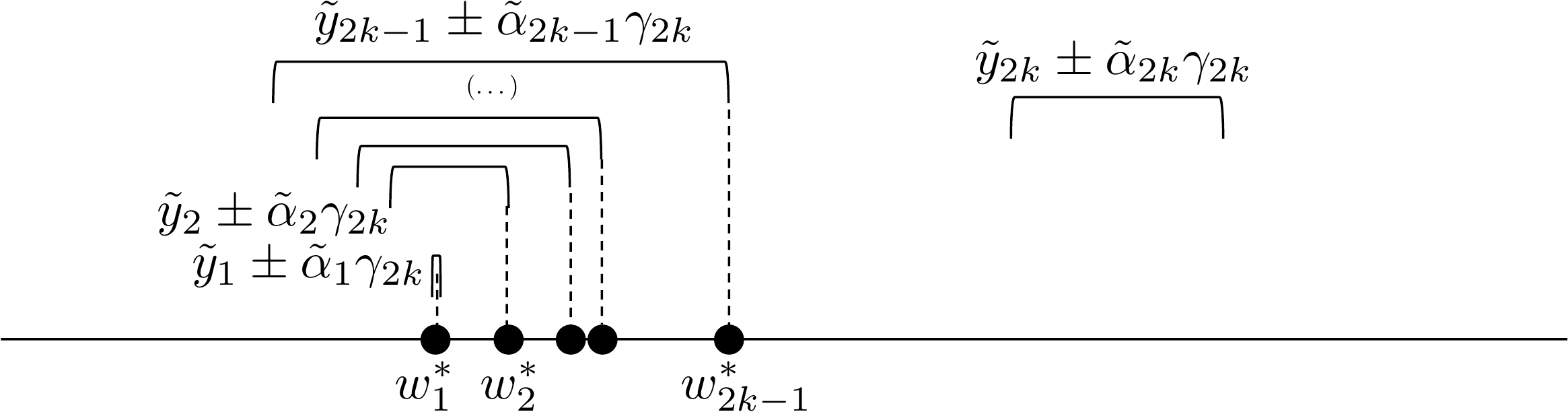}

In Scenario \ref{enu:scenario2}, in addition to what we have already described,
for $\gamma = \gamma_{r}$ and $r=2k+1$, we want the same conditions
as in  Scenario \ref{enu:scenario1} but now with left and right reversed. 
The following picture illustrates these conditions.
\includegraphics[width=8.35cm,trim={0.cm -0.5cm 0.cm -0.5cm},clip]{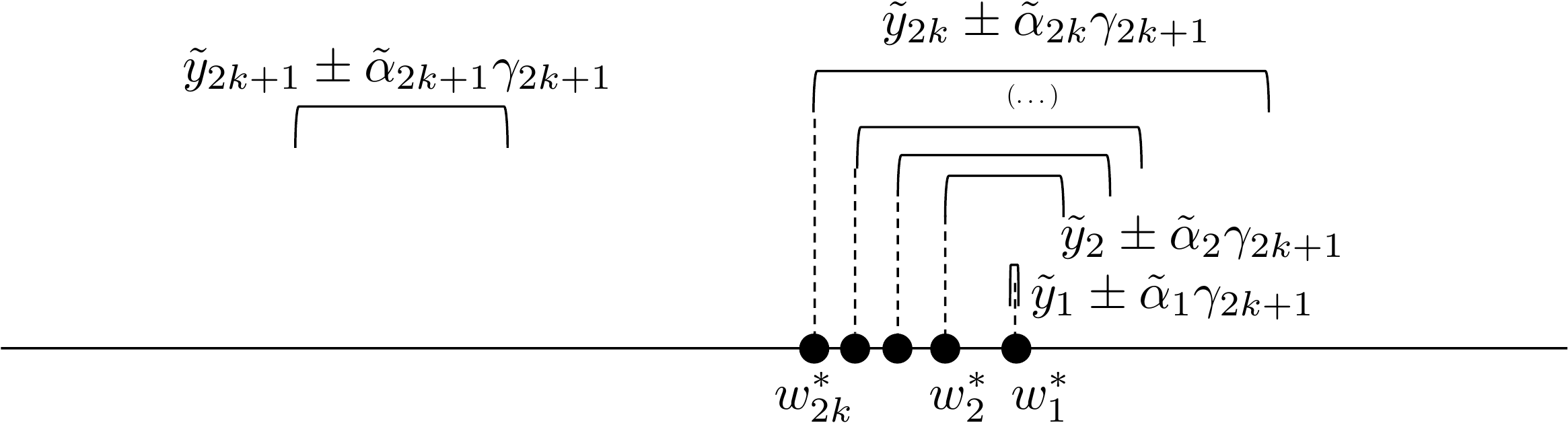}
Note that in both scenarios, we do not care where the points $r+1,\dots,n$ are.

As we explain next, a set of \emph{sufficient} conditions for Scenario \ref{enu:scenario1} (Scenario \ref{enu:scenario2}) to hold is that, for all $r\in\{3,\dots,n\}$,
\begin{align}
&| \tilde{y}_{i+1} - \tilde{y}_{i}|< (\tilde{\alpha}_{i+1} -\tilde{\alpha}_{i})  \gamma_r, \forall i \in [r-2],\label{eq:cond1_sce_1}\\
&|\tilde{y}_{i+2}\hspace{-1mm} -\hspace{-1mm}2\tilde{y}_{i+1} \hspace{-1mm} +\hspace{-0.8mm}\tilde{y}_{i} | \hspace{-1mm} <\hspace{-1mm} (\tilde{\alpha}_{i+2} \hspace{-1mm} -\hspace{-1mm} 2\tilde{\alpha}_{i+1} \hspace{-1mm} +\hspace{-0.8mm} \tilde{\alpha}_{i}) \gamma_r , \forall i \in[r-3],\label{eq:cond2_sce_1}\\
&(-1)^{r}(\tilde{y}_r- 2\tilde{y}_{r-1} +\tilde{y}_{r-2}) \hspace{-1mm} >\hspace{-1mm} (\tilde{\alpha}_r + 2\tilde{\alpha}_{r-1} - \tilde{\alpha}_{r-2}) \gamma_r \label{eq:cond3_sce_1}.
\end{align}
Condition \eqref{eq:cond1_sce_1} directly implies that the first $r-1$ intervals are nested.
Condition \eqref{eq:cond2_sce_1} implies that if the $(i+2)$th and $i$th points
are touching their right (left) boundary, then the $(i+1)$th point must be touching
its right (left) boundary. This can be seen by solving the simple quadratic problem $\min_w (w_{i+2}-w_{i+1})^2 + (w_{i+1}-w_{i})^2$ subject to $w_{i+2}$ and $w_{i+1}$ behind at their boundaries, and $w_{i+1}$ being inside its interval.
%
%In terms of our physical interpretation using springs, condition \eqref{eq:cond2_sce_1} implies that the length of the spring connecting the $(i+2)$th to the $(i+1)$th point is larger than the length of the spring connecting the $(i+1)$th to the $i$th point. Hence, the force that the $i$th point feels from its left and right springs pushes the point to the right (left) so that it touches its right (left) limit. 
Condition \eqref{eq:cond3_sce_1} implies that, no matter where $w^*_r$ is, even if $w^*_r$ is touching its left (right) boundary, the $(r-1)$th point must touch its right (left) boundary.
This can also be seen by solving a simple quadratic problem involving three variables.
%This has a physical interpretation similar to that of condition \eqref{eq:cond2_sce_1}.
These three conditions together imply that the first $r-1$ points are touching their right (left) boundary.

As we explain next, these conditions are in turn implied by the following set of conditions. These also imply that $\gamma_r$ is increasing.
\begin{align}
&\tilde{\alpha}_{i+2} - 2\tilde{\alpha}_{i+1} +\tilde{\alpha}_{i} > 0,\label{eq:cond1_simpler}\forall i \in [n-2] ,\\
&\tilde{\alpha}_{i+1} - \tilde{\alpha}_{i}  > 0,\forall i \in [n-1],\label{eq:cond2_simpler}\\
&\gamma_3 > \frac{q_2-q_1}{\tilde{\alpha}_2 - \tilde{\alpha}_1},\label{eq:cond3.0_simpler}\\
&\gamma_{i+3} \hspace{-1mm} >\hspace{-1mm}  \max \left \{\gamma_{i+2},\frac{{q}_{i+2} \hspace{-1mm} -\hspace{-1mm}  2{q}_{i+1}\hspace{-1mm} +\hspace{-0.8mm} {q}_{i}}{\tilde{\alpha}_{i+2} \hspace{-1mm} -\hspace{-0.8mm}  2\tilde{\alpha}_{i+1} \hspace{-1mm} +\hspace{-0.8mm}  \tilde{\alpha}_{i}},\frac{q_{i+2}\hspace{-1mm} -\hspace{-0.8mm} q_{i+1}}{\tilde{\alpha}_{i+2} \hspace{-1mm} -\hspace{-0.8mm}  \tilde{\alpha}_{i+1}}\right\},\label{eq:cond3_simpler}\\
&  \hspace{6cm }\forall i \in [n-3],\nonumber \\
&q_2 > q_1\label{eq:cond4.0_simpler},\\
&{q}_{i+2}  \hspace{-1mm} >\hspace{-0.8mm} 2q_{i+1} \hspace{-1mm} -\hspace{-0.8mm}q_{i} \hspace{-1mm} +\hspace{-0.8mm} (\tilde{\alpha}_{i+2} \hspace{-1mm} +\hspace{-0.8mm} 2\tilde{\alpha}_{i+1} \hspace{-1mm} -\hspace{-0.8mm} \tilde{\alpha}_{i})\gamma_{i+2},\forall i \in [n-2], \label{eq:cond4_simpler}\\
&\tilde{y}_i = (-1)^i q_i, \forall i \in [n] \label{eq:cond5_simpler}.
\end{align}

Conditions \eqref{eq:cond2_simpler}, \eqref{eq:cond3.0_simpler} and
\eqref{eq:cond4.0_simpler} imply that $\gamma_3 > 0$.
Condition \eqref{eq:cond3_simpler} further implies that $ 0 < \gamma_3 < \dots <
\gamma_n$.
Condition \eqref{eq:cond4_simpler} and \eqref{eq:cond5_simpler} imply 
condition \eqref{eq:cond3_sce_1}. Since $\tilde{\alpha}>0$, condition \eqref{eq:cond1_simpler} implies that $\tilde{\alpha}_{i+2} + 2\tilde{\alpha}_{i+1} - \tilde{\alpha}_{i} > 0$ for all $i \in [n-2]$ and thus, since $\gamma_{i+2} >0$, condition \eqref{eq:cond4.0_simpler} together with \eqref{eq:cond4_simpler} imply that ${q}_{i+2} - 2q_{i+1} +  q_{i} >0$ for all $i\in [n-2]$ and that
${q}_{i+1} - q_{i} >0$ for all $i\in [n-1]$. 
Therefore,
$|\tilde{y}_{i+2} -2\tilde{y}_{i+1} +\tilde{y}_{i} | =  {q}_{i+2} - 2q_{i+1} +  q_{i}$ for all $i\in [n-2]$
and $|\tilde{y}_{i+1} -\tilde{y}_{i}  | =  {q}_{i+1} - q_{i}$
  for all $i\in [n-1]$.
  These two equations, together with conditions 
\eqref{eq:cond1_simpler}, \eqref{eq:cond2_simpler},
\eqref{eq:cond3.0_simpler} and \eqref{eq:cond3_simpler} imply that \eqref{eq:cond1_sce_1}
holds for all $i \in [n-1]$, if we replace $\gamma_r$ by
$\gamma_{i+2}$. These also imply that
 \eqref{eq:cond2_sce_1} holds for all $i \in [n-2]$, if we replace $\gamma_r$ by $\gamma_{i+3}$ there. But since $\gamma_i$
is increasing, we have that both \eqref{eq:cond1_sce_1} and \eqref{eq:cond2_sce_1} hold true exactly as specified.

We finally argue by contradiction why conditions
\eqref{eq:cond1_simpler}-\eqref{eq:cond5_simpler}
 also imply that no two consecutive components of $w^*$ stop touching, or start touching, their boundary at the same $\gamma$. We prove this only for values of $\gamma$ that are in some of the intervals $[\gamma_{r},\gamma_{r+1})$ that contribute to our $n^2$ estimate in \eqref{eq:critical_point_estimate}.

Assume that both $w^*_i$ and 
$w^*_{i+1}$ have a critical point at $\gamma = \gamma_c$.
Assume also that 
$\gamma_c \in [\gamma_{r},\gamma_{r+1})$ for some $r>i+1$. Note that $\gamma_c$ must satisfy this condition if it is to contribute for our $n^2$ estimate in \eqref{eq:critical_point_estimate}.
It must be that $|w^*_{i-1} - w^*_i| = | w^*_i - w^*_{i+1}| = |w^*_{i+1} -  w^*_{i+2}|$. At the same time, and as we saw in the previous paragraph, conditions
\eqref{eq:cond1_simpler}-\eqref{eq:cond5_simpler} imply 
that $|\tilde{y}_{i+1}  - 2\tilde{y}_{i}  + \tilde{y}_{i-1} |  <  (\tilde{\alpha}_{i+1}   -  2\tilde{\alpha}_{i}   + \tilde{\alpha}_{i-1}) \gamma_{r} \leq (\tilde{\alpha}_{i+1}   -  2\tilde{\alpha}_{i}   + \tilde{\alpha}_{i-1}) \gamma_{c}$, which in turn implies that $|w^*_i - w^*_{i-1}| < |w^*_{i+1} - w^*_{i}|$, which is a contradiction.
\end{proof}

\section{Numerical experiments}

Figure \ref{fig:numerical_break_points}-(left) shows a numerical computation of the number of critical points as a function of $n$ for the example \eqref{eq:counter_1}-\eqref{eq:counter_4}. As Theorem \ref{th:lower_bound} predicts, the
number of critical points grows quadratically with $n$. One can also observe that difference between ``fuse'' and ``un-fuse'' events is $\mathcal{O}(n)$.
For Figure \ref{fig:numerical_break_points}-(right), we generated $100$ random sets of $\alpha$ and $y$, and, for each size $n$, we show on the $y$-axis the average number of ``fuse'' events and ``un-fuse'' events observed over these $100$ runs. Each $\alpha_i$ was sampled from a uniform distribution in $[0,1]$, independently across $\alpha$'s, and each $y_i$ was sampled from a $\mathcal{N}(0,\sqrt{10})$, independently across $y$'s. Although Theorem \ref{th:lower_bound} tells us that we can observe $\Omega(n^2)$ ``fuse'' and ``un-fuse'' events, in our random instances for W1FL, ``un-fuse'' events are rare and both types of events seem to grow linearly with $n$.

Critical points can be computed using, for example,
\cite{tibshirani2011solution}. In  Supplementary Material we give simple algorithm which we use to compute the critical points  based on Theorem \ref{th:spring_representation}.

\begin{figure}[h!]
\includegraphics[width=5cm,trim={0cm 0cm 0cm 0cm},clip]{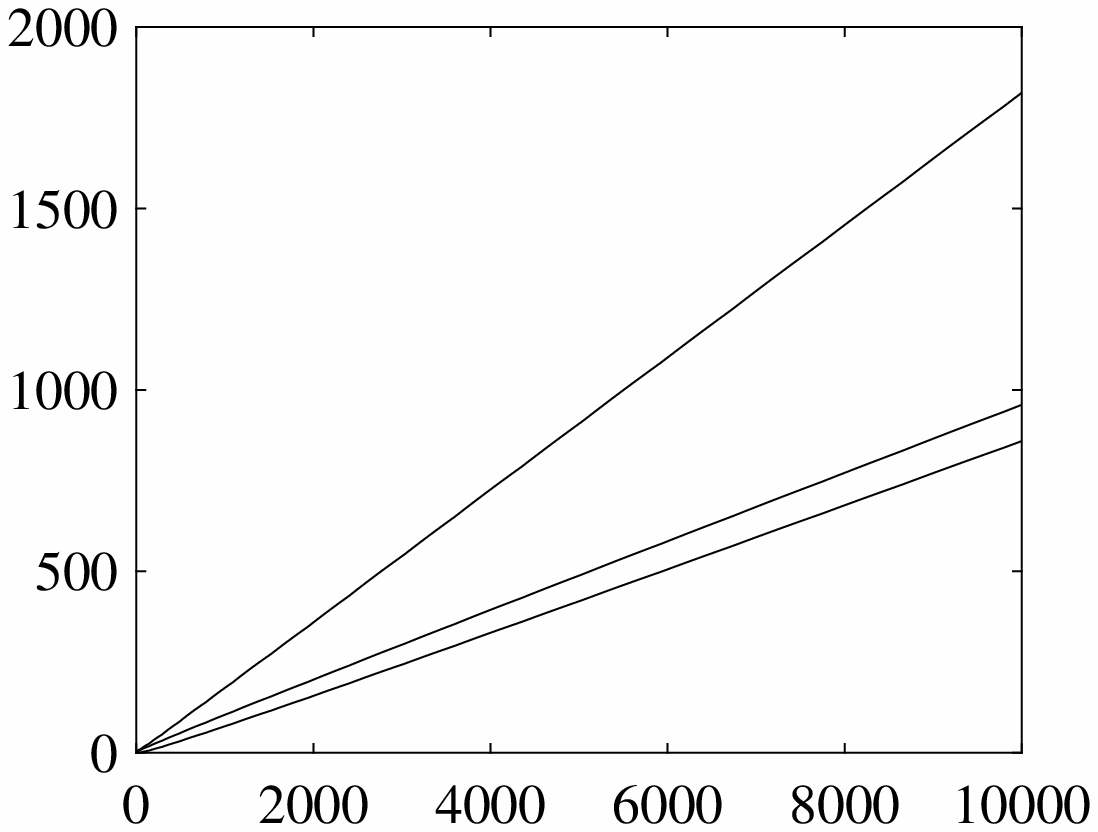}
        \put(-147,30){\rotatebox{90}{\footnotesize \# critical points}}
      \put(-90,-7){\footnotesize Input size squared, $n^2$}
        \put(-77,60){\footnotesize all events}
        \put(-77,40){\footnotesize ``fuse'' events}
                \put(-77,20){\footnotesize ``un-fuse'' events}
    \includegraphics[width=4.37cm,trim={0cm 0.cm 0.cm 0.0cm},clip]{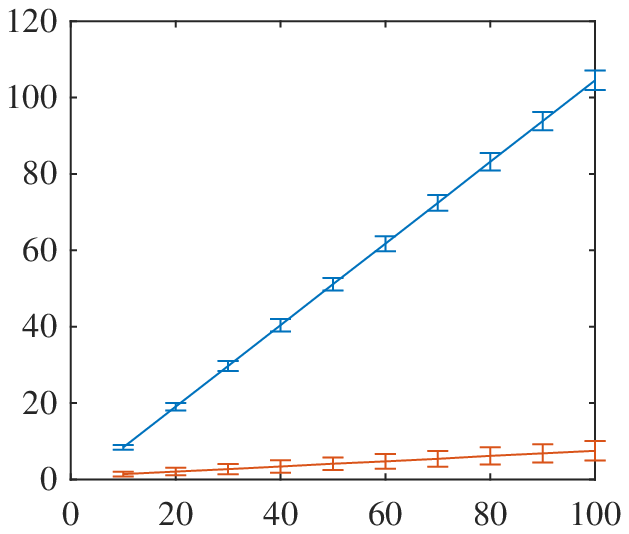}
      \put(-130,30){\rotatebox{90}{\footnotesize \# critical points}}
      \put(-75,-7){\footnotesize Input size, $n$}
      \put(-77,15){\footnotesize ``un-fuse'' events}
      \put(-77,65){\footnotesize ``fuse'' events}
    \vspace{-0.cm}
    \caption{(Left) Number of critical points as function of $n^2$ for the example of Theorem \ref{th:lower_bound}; (Right) Number of ``fuse'' and ``un-fuse'' events in random instances of W1FL, as a function of $n$.}
  \label{fig:numerical_break_points}
\end{figure}
%

%\vspace{-0.6cm}
\section{Future work}

The weighted fused lasso on an input of size $n$ is substantially different from
the equal-weights fused lasso: two consecutive components can both become equal (``fuse'') and become unequal (``un-fuse'') multiple times along the solution path. We have shown that there are instances with $\Omega(n^2)$ of these ``fuse''/``un-fuse'' events, and that no instance can have more than $\mathcal{O}(n^2)$ events. We have also produced a very simple proof of why, in the equal-weights fused lasso, there are $\mathcal{O}(n)$ events.

Future work should include finding conditions for the weights $\alpha$ and input $y$ under which (a) a $\mathcal{O}(n)$ bound holds and, (b) the number of ``fuse'' and ``un-fuse'' events are substantially different. It would then be useful to compute how likely it is for these conditions to be satisfied, under different stochastic models for the input. 

%\vspace{-0.25cm}
\section*{Acknowledgment}
We would like to thank Piotr Suwara for his important help
in some of the proofs. This work was partially supported by the grants NSF-IIS-1741129 and NIH-1U01AI124302.

%%%%%%%%%%%%%%%%%%%%%%%%%%%%%%%%%%%%%%%%%%%%%%%%%%%%%%%%%%%%%%%%%%%%%%%%%%%%%%%
%\bibliographystyle{unsrt}
\bibliography{biblio}
\bibliographystyle{ieeetr}

%
%
%%%%%%%%%%%%%%%%%%%%%%%%%%%%%
%%%%%%%%%%%%%%%%%%%%%%%%%%%%%%
%
\newpage
\clearpage

{\bf  \large Supplementary Material to ``On the Complexity of the Weighted fused Lasso''}

\section{Proof of Theorem \ref{th:spring_representation}}

We begin by reviewing Moreau's identity.
Let $f(x)$ be a closed proper convex function and let
\begin{equation}\label{eq:form_of_gen_prox_ope}
F(y) = \arg \min_x f(x) + \frac{1}{2}\|x-y\|^2_2
\end{equation}
be its proximal operator.
Let 
\begin{equation}
\hat{f}(x) = \sup_z \langle z,x\rangle - f(z)
\end{equation}
be the Fenchel dual of $f$ and let
\begin{equation}
\hat{F}(y) = \arg \min_x \hat{f}(x) + \frac{1}{2}\|x-y\|^2_2
\end{equation}
 be its proximal operator.
 Moreau's identity is
\begin{equation}\label{eq:moreide}
F(y) = y - \hat{F}(y).
\end{equation}

\begin{proof}[Proof of Theorem \ref{th:spring_representation}]
If $f(x) = \sum^{n-1}_{t=1} \gamma \alpha_t |x_{t+1}-x_t|$, 
then $x^*(\gamma) = F(y)$. Therefore, Theorem \ref{th:spring_representation} amounts to an
optimization problem to compute $F^*(y)$, from which
we can compute $F$, and hence $x^*$, using \eqref{eq:moreide}.

We start by making a change of variables in
\begin{equation}\label{eq:fenchel_subs}
f^*(x) =  \sup_z \sum^n_{t=1} z_t x_t - \sum^{n-1}_{t=1} \alpha_t |z_{t+1}-z_t|.
\end{equation}
Let $h_0 = z_1$ and $h_t = z_{t+1} - z_t$, for $t \in [n-1]$. This implies that $z_{t+1} = \sum^t_{i=0} h_i$
and thus that $\sum^n_{t=1} z_t x_t = \sum^n_{t=1} \sum^{t-1}_{i=0} h_i x_t = \sum^{n-1}_{t=0} h_t (\sum^n_{i = t+1} x_i)
= \sum^{n-1}_{t=0} h_t u_t$, where we have defined
$u_t=\sum^n_{i = t+1} x_i$
Therefore, we can rewrite \eqref{eq:fenchel_subs} as
\begin{equation}\label{eq:fenchel_subs_transformed}
f^*(x) =  \sup_h \sum^{n-1}_{t=0} h_t u_t - \sum^{n-1}_{t=0} \gamma \alpha_t |h_t|,
\end{equation}
where we have extended $\alpha$ such that $\alpha_0=0$.
Problem \eqref{eq:fenchel_subs_transformed} breaks down into $n$ independent one-dimensional problems of the form 
\begin{equation}
\sup_{h_t} h_t u_t - \gamma \alpha_t |h_t|,
\end{equation}
that have solution
$0$ if $u_t \in [-\gamma\alpha_t,\gamma\alpha_t]$, and $\infty$ otherwise.
Therefore, $\hat{f}(x) = \infty$ if ${u}_t=\sum^n_{i = t+1} x_i \notin [-\gamma\alpha_t,\gamma\alpha_t]$ for some $t = 0,\cdots,n-1$, and $\hat{f}(x) =0$ otherwise.

We can now write
\begin{align}\label{eq:almost_final_spring_transf}
&F(y) = y - \hat{F}(y) = y - \arg \min_x \sum^{n}_{t=1} (x_t - y_t)^2 \\
&\text{ subject to } \sum^n_{i = t+1} x_i \in [-\gamma \alpha_t,\gamma \alpha_t],t=0,\cdots,n-1.\nonumber
\end{align}

Finally, we make the following change of variable in
\eqref{eq:almost_final_spring_transf}:
 $w_t = \sum^n_{i = t} (x_i - y_i)$ for all
$t \in [n+1]$, where we define $w_{n+1} =0$.
This implies that $x_i  = y_i - w_{i+1} + w_{i}$
for $i\in[n]$, and hence that
\begin{align}
x^*_i &= F(y)_i = y_i - \hat{F}(y)_i = y_i - (y_i - w^*_{i+1} + w^*_{i})\\
&= w^*_{i+1} - w^*_{i}
\end{align}
for all $i\in [n]$, where 
\begin{align}
&w^* = \arg \min_w \sum^{n}_{t=1} (w_{t+1} - w_t)^2 \\
&\text{ subject to } w_{t+1} - \sum^n_{i=t+1} y_i \in [-\gamma \alpha_t,\gamma \alpha_t],t=0,\cdots,n-1.\nonumber
\end{align}
\end{proof}

\section{Simple algorithm to compute critical points}

Before we introduce the algorithm, we need a fourth observation, in addition to the three observations already made in the main text.
For this observation to be valid,
we assume that
the $\alpha$'s are not in some 
set of measure zero on the space of all possible $\alpha$'s. The following technical results, whose proofs are standard, e.g. \cite{halmos2013measure,sundaram1996first}, will be used to extend these observations to any set of $\alpha$'s.
We include a self contained proof of these lemmas in Section \ref{sec:app_proof_of_technical_lemma}.
\begin{lemma}\label{th:measure_zero}
Let $x \in X \subset \mathbb{R}^k$, where $X$ has zero Lebesgue measure.
For any $\epsilon > 0$, there exists a point $y \notin X$ such that $\|x-y\| < \epsilon$.
\end{lemma}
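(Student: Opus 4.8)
The plan is to prove Lemma \ref{th:measure_zero} by contradiction, using the fact that an open ball has strictly positive Lebesgue measure. Suppose, for contradiction, that there exists $x \in X$ and $\epsilon > 0$ such that every point $y$ with $\|x - y\| < \epsilon$ lies in $X$. In other words, the open ball $B(x,\epsilon) = \{y \in \mathbb{R}^k : \|x-y\| < \epsilon\}$ is entirely contained in $X$. Since $X$ has zero Lebesgue measure and Lebesgue measure is monotone under inclusion, we would get $\vol(B(x,\epsilon)) \leq \vol(X) = 0$. But the Lebesgue measure of the open Euclidean ball of radius $\epsilon > 0$ in $\mathbb{R}^k$ is $c_k \epsilon^k > 0$, where $c_k$ is the volume of the unit ball in $\mathbb{R}^k$; this is a strictly positive quantity, which contradicts $\vol(B(x,\epsilon)) = 0$. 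Hence no such $x$ and $\epsilon$ can exist, which is exactly the claim: for every $x \in X$ and every $\epsilon > 0$ there is a point $y \notin X$ with $\|x - y\| < \epsilon$.

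The key steps, in order, are: (i) negate the conclusion to obtain a ball $B(x,\epsilon) \subseteq X$; (ii) invoke monotonicity of Lebesgue measure to bound $\vol(B(x,\epsilon)) \leq \vol(X)$; (iii) use $\vol(X) = 0$; (iv) contrast with the standard computation that an open ball of positive radius has positive Lebesgue measure, yielding the contradiction. If one prefers to avoid quoting the exact volume formula for the ball, an equivalent route is to note that $B(x,\epsilon)$ contains a nondegenerate axis-aligned cube (e.g. the cube of side $\epsilon/\sqrt{k}$ centered at $x$), whose measure is the product of its positive side lengths, hence positive by the definition of Lebesgue (product) measure on $\mathbb{R}^k$.

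I do not expect a genuine obstacle here — the statement is essentially the observation that the complement of a null set is dense — so the "hard part" is merely a matter of deciding how many measure-theoretic prerequisites to take for granted. The cleanest exposition assumes only two facts: monotonicity of Lebesgue measure and positivity of the measure of a nonempty open set (or of a nondegenerate box). Both are among the most basic properties of Lebesgue measure, so the proof can be stated in a few lines without any delicate estimates.
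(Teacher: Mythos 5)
Your proof is correct and follows essentially the same route as the paper's: negate the conclusion to place an open ball inside $X$, then contradict the zero-measure hypothesis with the positive measure of the ball. The paper's version is just a terser statement of the same contradiction argument.
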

\begin{lemma}\label{eq:cont_lemma}
The function $x^*(\gamma,\alpha)$ is continuous
at every point of the domain $(\gamma,\alpha) \geq 0$.
\end{lemma}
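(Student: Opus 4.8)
The plan is to exploit the fact that the W1FL objective is uniformly strongly convex in $x$, which turns closeness of objective values into closeness of minimizers. Write
\[
\phi(x;\gamma,\alpha)=\tfrac{1}{2}\sum_{t=1}^{n}(x_t-y_t)^2+\gamma\sum_{t=1}^{n-1}\alpha_t|x_{t+1}-x_t|,
\]
so that $x^*(\gamma,\alpha)=\argmin_x\phi(x;\gamma,\alpha)$. For $(\gamma,\alpha)\ge 0$ the penalty term is convex, hence $\phi(\cdot;\gamma,\alpha)$ is $1$-strongly convex, \emph{uniformly} in the parameters. Since $0\in\partial_x\phi\big(x^*(\gamma,\alpha);\gamma,\alpha\big)$, strong convexity yields the pointwise bound
\[
\phi(x;\gamma,\alpha)\ \ge\ \phi\big(x^*(\gamma,\alpha);\gamma,\alpha\big)+\tfrac{1}{2}\big\|x-x^*(\gamma,\alpha)\big\|^2\qquad\text{for all }x .
\]

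First I would record an a priori bound. Comparing the optimal value with the value at $x=y$ gives $\tfrac12\|x^*(\gamma,\alpha)-y\|^2\le\phi(x^*;\gamma,\alpha)\le\phi(y;\gamma,\alpha)=\gamma\sum_t\alpha_t|y_{t+1}-y_t|$, so $x^*(\gamma,\alpha)$, and in particular each consecutive difference $|x^*(\gamma,\alpha)_{t+1}-x^*(\gamma,\alpha)_t|$, stays bounded when $(\gamma,\alpha)$ ranges over any bounded set $K$ contained in the domain. Next, fix a point $(\gamma,\alpha)$ in the domain, let $K$ be a bounded (one-sided, if on the boundary) neighborhood of it, and take two parameter points $(\gamma_1,\alpha^{(1)}),(\gamma_2,\alpha^{(2)})\in K$ with minimizers $x_1,x_2$. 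Applying the strong-convexity inequality twice — for each parameter point, evaluated at the other minimizer — and adding, the quadratic data $\tfrac12\|x-y\|^2$ cancels and one is left with
\[
\|x_1-x_2\|^2\ \le\ \sum_{t=1}^{n-1}\big(\gamma_1\alpha^{(1)}_t-\gamma_2\alpha^{(2)}_t\big)\big(|x_{2,t+1}-x_{2,t}|-|x_{1,t+1}-x_{1,t}|\big).
\]
Bounding the consecutive differences of $x_1,x_2$ by the a priori estimate over $K$, and writing $\gamma_1\alpha^{(1)}_t-\gamma_2\alpha^{(2)}_t$ as $\gamma_1(\alpha^{(1)}_t-\alpha^{(2)}_t)+(\gamma_1-\gamma_2)\alpha^{(2)}_t$, the right-hand side is at most $C\big(|\gamma_1-\gamma_2|+\|\alpha^{(1)}-\alpha^{(2)}\|\big)$ for a constant $C$ depending only on $K$, $y$, and $n$. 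Hence $x^*$ is locally $\tfrac12$-Hölder continuous on $\{(\gamma,\alpha)\ge 0\}$, which in particular gives continuity; the boundary of the domain causes no difficulty since only one-sided neighborhoods enter the argument.

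The one mildly delicate point is the pointwise strong-convexity inequality at a nonsmooth minimizer; this is routine convex analysis (if $g$ is $\mu$-strongly convex and $0\in\partial g(x^*)$ then $g(x)\ge g(x^*)+\tfrac{\mu}{2}\|x-x^*\|^2$), but it should be stated with care because $\phi$ is not differentiable. An alternative, essentially equivalent, route is Berge's maximum theorem: restrict $x$ to the compact ball supplied by the a priori bound, observe that the constraint correspondence is constant (hence continuous) and $\phi$ jointly continuous, conclude that the $\argmin$ correspondence is upper hemicontinuous, and upgrade to continuity using uniqueness of the minimizer (a consequence of strong convexity). I expect the direct strong-convexity estimate to be the cleanest, and it has the bonus of producing an explicit local modulus of continuity.
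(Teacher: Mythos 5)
Your proof is correct, and it takes a genuinely different route from the paper's. The paper first reduces joint continuity in $(\gamma,\alpha)$ to continuity in $\alpha$ alone, then invokes the dual (``spring'') representation of Theorem 1 to write $x^*$ as a linear transformation of $z^*$, the Euclidean projection of the origin onto the polytope $\mathcal{P}(\alpha)=\{z: |\tilde{y}_i-\sum_{r\le i}z_r|\le\gamma\tilde{\alpha}_{i-1}\}$, and proves that this projection varies continuously with $\alpha$ via three geometric observations (nearby polytopes contain nearby points, comparison of the optimal norms, and a half-space argument exploiting convexity); one of these steps, that ``the faces of the polytope change continuously with $\alpha$,'' is essentially asserted rather than quantified. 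You instead work directly on the primal objective: uniform $1$-strong convexity gives $\phi(x;\gamma,\alpha)\ge\phi(x^*;\gamma,\alpha)+\tfrac12\|x-x^*\|^2$, the comparison with $x=y$ gives a locally uniform a priori bound on $x^*$, and adding the two strong-convexity inequalities at two parameter points cancels the common quadratic data term and yields $\|x_1-x_2\|^2\le\sum_t(\gamma_1\alpha^{(1)}_t-\gamma_2\alpha^{(2)}_t)\bigl(|x_{2,t+1}-x_{2,t}|-|x_{1,t+1}-x_{1,t}|\bigr)$, hence a local H\"older-$\tfrac12$ modulus; this algebra checks out, and the subgradient form of strong convexity at the nonsmooth minimizer that you flag is indeed standard. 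What each approach buys: the paper's argument reuses machinery it already needs (Theorem 1 and the geometry of $w^*$, which is also what observation 4 and the algorithm are phrased in), whereas yours is self-contained, handles $(\gamma,\alpha)$ jointly without the reduction step, avoids the informal polytope-continuity claim, and is quantitative — in fact your estimate immediately gives the continuity of $w^*(\gamma,\alpha)$ as well, via the partial-sum relation noted in the paper's remark.
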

\begin{remark}
Since $w^*(\gamma,\alpha)_{t+1} = \tilde{y}_1 + \sum^{t}_{i=1} x^*(\gamma,\alpha)_i$, this also proves that
$w^*(\gamma,\alpha)$ is continuous at every point of its domain.
\end{remark}

{\bf 4)} We can assume,
without loss of generality, that at most one component 
of $w^*$ transitions from free to non-free, or vice versa, at each $\gamma$.
This follows from the fact that, for two indices $i$ and $j$  (or more)
to satisfy \eqref{eq:critical_entering_time} for the same $\gamma_c$,
the values of $\tilde{\alpha}_i$, $\tilde{\alpha}_j$,
$\tilde{\alpha}_{i^\triangleright}$, $\tilde{\alpha}_{i^\triangleleft}$, $\tilde{\alpha}_{j^\triangleright}$ and $\tilde{\alpha}_{i^\triangleleft}$ 
 must belong to some set $\zeta$ of measure zero (in the space of possible $\alpha$'s).
Using Lemma \ref{th:measure_zero} and Lemma \ref{eq:cont_lemma}, we can then extend our arguments made outside $\zeta$ to this set as well.

Our four observations allows us to describe a
simple direct algorithm to compute the path $w^*(\gamma)$.
Its interpretation is simple: start with $\gamma = 0$.
Increase $\gamma$ and, as the intervals $\tilde{y}_i + \gamma [-\tilde{\alpha}_i,\tilde{\alpha}_i]$ grow larger, keep track
of which points are touching either limit of its interval.
For each interval of values of $\gamma$ for which $B$ is fixed, we can use \eqref{eq:movement_dictated_by_extremes_2} to compute how each point moves, figure out which next point that will become free, or non-free, and hence compute the next  $B$. 

Let us be more precise \footnote{The following algorithm only describes how to compute 
$\{\gamma_i\}^{T-1}_{i=1}$, the critical points.
It can be modified
to compute $\{w^*(\gamma_i)\}^{T-1}_{i=1}$
with only a multiplying factor slow down in the run time.}.
\begin{enumerate}[leftmargin=*]
\item Start with $\gamma_1 = 0$,  $F = \{\}$ and iteration number $r = 1$.
All points are non-free.
\item At iteration $r$, use \eqref{eq:critical_entering_time} 
to find $\gamma$'s at which an
$i \notin F$ \emph{might} enter $F$
or at which an $i \in F$ \emph{might} leave $F$.
Store these values as $\gamma^{(1)}_c$,$\gamma^{(2)}_c$, ... and $\gamma'^{(1)}_c$, $\gamma'^{(2)}_c$, ...
respectively, where the upper index $(i)$ in $\gamma^{(i)}_c$ and  
$\gamma'^{(i)}_c$ refers to the index of the point that might enter or leave $F$. If point $i$ cannot leave $F$ in round $r$, then $\gamma^{(i)}_c = \infty$. If point $i$ cannot enter $F$ in round $r$, then $\gamma'^{(i)}_c = \infty$.

These values are not all critical points on $w^*(
\gamma)$, but
all critical points satisfy \eqref{eq:critical_entering_time}, and, in particular, the smallest value produced corresponds to the next critical point. \label{enu:homotopy_while_loop}
\item Append or remove from $F$ the index
\begin{equation}
i_c = \min \{ \min_{i:\gamma^{(i)}_c > \gamma_r}  \gamma^{(i)}_c , \min_{i:\gamma'^{(i)}_c > \gamma_r}  \gamma'^{(i)}_c\}.
\end{equation}
Set $\gamma_{r+1} = \gamma^{(i_c)}_c$ or
$\gamma_{r+1} = \gamma'^{(i_c)}_c$. Set
$\gamma^{(i)}_c = \infty$ or $\gamma'^{(i)}_c = \infty$,
the choice depending on whether there is a point leaving, or entering $F$. We do this last assignment to avoid a cycle of removing and adding the same point to $F$, ad infinitum, for the same value of $\gamma$, without the algorithm progressing.
\item Update $\gamma^{(j)}_c$ or $\gamma'^{(j)}_c$ for all $j \in \{i^\triangleleft,\dots,i-1,i+1,\dots,i^\triangleright\}$. All the other previously computed $\gamma^{(j)}_c$ and $\gamma'^{(j)}_c$ will be the same. For later purposes,
let us call by $k_r$ the number of values updated in this step.
\item Terminate if $B = \{1,n+1\}$, otherwise go back to step \ref{enu:homotopy_while_loop}.
\end{enumerate}

In the above algorithm, it is convenient to represent $B$ as a linked list
such that we can access its elements in the order
of their indices. We do not explicitly represent $F$. An element is in $F$
if it is not in the linked list $B$. 
This representation for $B$ also allows us to
loop over the elements in $F$ in order of their indices, by
looping over the elements of $B$ and, for 
any two consecutive elements in $B$, say $a < b$, 
looping over all indices $a+1,\dots,b-1$, which we know must be in $F$.
Given a point $i$, either in $B$ or in $F$, this allows us to 
determine, in $\mathcal{O}(1)$
steps, the indices $i^{\triangleleft}$ and
$i^{\triangleright}$.
We can also add and remove points from $B$ in
$\mathcal{O}(1)$ steps while keeping the list ordered.

If we use a binary minimum heap to keep track of the
minimum value of
across $\{\gamma^{(j)}_c\}$ and $\{\gamma'^{(j)}_c\}$,
we pay a computational cost of $\mathcal{O}(\log n)$ each time
we update $\gamma^{(j)}_c$ or $\gamma'^{(j)}_c$ for some $j$.
Therefore, the complexity of this algorithm is
\begin{equation}
\mathcal{O}\left(\left(\sum^T_{r=1} k_r \right) \log n\right)
\end{equation}
where, we recall, $T$ is the number of critical points in the path
$\{w^*(\gamma)\}$, and $n$ is the number of components in the input $y$. In the particular case of 1FL, where $\alpha_i = 1\; \forall i$,
we have $T \leq n$ (See e.g. Theorem \ref{th:linear_complexity_1FL}) and points only enter $F$, they never leave $F$. Therefore, we only
need to use \eqref{eq:critical_entering_time} for
points in $B$ and thus $k_r =2 \; \forall r$. This leads us to
$\mathcal{O}(n\log n)$, just like in \cite{hoefling2010path}.

\section{Proof of technical Lemma \ref{th:measure_zero} and Lemma \ref{eq:cont_lemma}}\label{sec:app_proof_of_technical_lemma}

\begin{proof}[Proof of Lemma \ref{th:measure_zero}]
The proof follows by contradiction. 
Assume that there exists $\epsilon > 0$ such that, for all $y$ with $\|x-y\| < \epsilon$, we have $y \in X$.
$X$ contains a ball of size $\epsilon/2$, and thus has non-zero measure.
\end{proof}

\begin{proof}[Proof of Lemma \ref{eq:cont_lemma}]

Since $(\gamma {\alpha}_1,\dots,\gamma {\alpha}_{n+1})$ is continuous as a function of
$(\gamma, \alpha)$, it follows that, to prove that 
$x^*$ is continuous as a function of $(\gamma, \alpha)$,
we only need to prove that $x^*$ is continuous as a function of $\alpha$.

We now assume $\gamma$ fixed, and for simplicity write $x^*(\gamma)$
as $x^*$ or $x^*(\alpha)$. The same goes for all the variables introduced below.
The proof proceeds in two steps.
First we show that $x^*$ can be obtained from a linear
transformation of a point $z^*$ defined as the
point in the convex polytope
$\mathcal{P}(\alpha) = \{z \in \mathbb{R}^n:  
|\tilde{y}_i - \sum^i_{r=1} z_r| \leq \gamma \tilde{\alpha}_{i-1} \text{ for all } i\in[n+1] \}$ that is closest to the origin.
Second we show that $z^*$ is continuous in $\alpha$.

To establish the first step, just make the change of variable
$z_1 = w_1$, $z_{i+1} = w_{i+1} - w_i$, $i > 1$
 in \eqref{eq:spring_interpretation}.
 
To establish the second step, we first make three observations.
(Obs. 1) If $|\alpha' - \alpha| < \delta$, then for any $z' \in \mathcal{P}(\alpha')$, there exists $z \in \mathcal{P}(\alpha)$ such that
$\|z-z'\| < \epsilon_1(\delta)$, where $\epsilon_1(\delta)$ converges to zero as $\delta$ converges to zero.
(Obs. 2) If $|\alpha' - \alpha| < \delta$, then $\|z^*(\alpha')\| < 
\|z^*(\alpha)\| + \epsilon_2(\delta)$, where $\epsilon_2(\delta)$ converges to zero as $\delta$ converges to zero.
(Obs. 3) If $z\in \mathcal{P}(\alpha)$ is such that $\|z\| < \|z^*\| + \delta$, then $\|z - z^*\| < \epsilon_3(\delta)$, where
$\epsilon_3(\delta)$ converges to zero as $\delta$ converges to zero.

%%%
{
Obs. 1 follows because the faces of the polytope change continuously with $\alpha$. Obs. 2 follows from
Obs. 1, since, if $z\in\mathcal{P}(\alpha')$ is the closet point to
$z^*(\alpha)$, then, by Obs. 1, $\|z^*(\alpha)\| > \|z\| - \epsilon_1(\delta) \geq 
\|z^*(\alpha')\| - \epsilon_1(\delta)$.
Obs. 3 is trivial when the origin is in the interior
of $\mathcal{P}(\alpha)$, so we focus on the other case.
Let $d = \sup \|z - z^*\| $ be such that $z \in \mathcal{P}$, and $\|z\| < \|z^*\| +  \delta$, and let $d' = \sup \|z - z^*\| $ be such that $z \in \mathcal{H}$,  and $\|z\| < \|z^*\| +  \delta$, where $\mathcal{H}$ is the half-plane defined by 
$z^\top z^*\geq \|z^*\|^2$. Since $\mathcal{P}(\alpha)$ is convex, $\mathcal{P}(\alpha) \subseteq \mathcal{H}$ and thus $d \leq d' < 
\sqrt{(\|z^*\|+ \delta)^2 - \|z^*\|^2}=
\sqrt{\delta} \sqrt{2 \|z^*\| + \delta}$, where the last inequality follows by simple geometry.
Therefore, $\|z - z^*\| < \sqrt{\delta} \sqrt{2 \|z^*\|+ \delta}$,
which converges to zero as $\delta$ converges to zero.}

Now let $|\alpha' - \alpha| < \delta$
and let $q \in \mathcal{P}(\alpha)$ be the closet point to $z^*(\alpha')$. By Obs. 1, $\|z^*(\alpha') - q\| < \epsilon_1(\delta)$ and thus $\|q\| < \|z^*(\alpha')\| + \epsilon_1(\delta)$.
By Obs. 2 , $\|q\| <  \|z^*(\alpha)\| + \epsilon_2(\delta)+\epsilon_1(\delta)$. By Obs. 3, $\| q -z^*(\alpha)\| < \epsilon_3(\epsilon_2(\delta)+\epsilon_1(\delta))$.
We can finally write, 
$\|z^*(\alpha') -z^*(\alpha)\| = \|z^*(\alpha') - q+ q -z^*(\alpha)\|\leq \|z^*(\alpha') - q\|  + \| q -z^*(\alpha)\|
\leq \epsilon_1(\delta)  + \epsilon_3(\epsilon_2(\delta)+\epsilon_1(\delta))$. This proves continuity since the right hand side converges to zero as $\delta$ converges to zero.

\end{proof}

\end{document}